\documentclass{article}

  \usepackage[preprint]{neurips_2026}

\usepackage{algorithm}
\usepackage{bm}
\usepackage{amsmath,amsthm,mathtools,amssymb}
\usepackage{xurl}
\usepackage{algpseudocode}

\usepackage[utf8]{inputenc} 
\usepackage[T1]{fontenc}    
\usepackage[colorlinks, citecolor=blue,urlcolor=magenta]{hyperref}       
\usepackage{url}            
\usepackage{booktabs}       
\usepackage{amsfonts}       
\usepackage{nicefrac}       
\usepackage{microtype}      
\usepackage{xcolor}         

\usepackage{graphicx}
\usepackage{subcaption}
\usepackage{amsmath}
\usepackage[normalem]{ulem}
\usepackage{amssymb}
\usepackage{mathtools}
\usepackage{amsthm}
\usepackage{mathtools}
\usepackage{bbm}
\usepackage{thm-restate}
\usepackage{enumitem}
\usepackage{threeparttable}

\usepackage[capitalise,noabbrev]{cleveref}

\newtheorem{theorem}{Theorem}
\newtheorem{lemma}{Lemma}

\newtheorem{remark}{Remark}
\newcommand{\reftext}{\text{ref}}
\DeclareMathOperator{\DualGap}{\mathrm{DualGap}}
\newcommand{\uni}{\text{uni}}

\newcommand{\argmax}{\mathrm{argmax}}
\newcommand{\argmin}{\mathrm{argmin}}

\title{Inference-Time Nash Alignment}

\author{%
  Hadi Hosseini \\
  Penn State University, USA\\
 \texttt{hadi@psu.edu} \\
   \And
   Debmalya Mandal \\
   University of Warwick, UK \\ \texttt{Debmalya.Mandal@warwick.ac.uk} \\
   \AND
   Duohan Zhang\thanks{Corresponding author.} \\
   Penn State University, USA \\
   \texttt{dqz5235@psu.edu} \\
}

\begin{document}

\maketitle

\begin{abstract}

Preference-based fine-tuning methods such as RLHF and DPO require substantial compute and large preference datasets. They also need direct access to the model parameters which are not provided by many state-of-the art models. Inference-time alignment offers a cost-effective alternative without updating model parameters. However, existing inference-time methods rely on a scalar reward model derived under a Bradley-Terry assumption, which cannot represent general preferences. Following recent work on fine-tuning with generalized preferences, in this work, we initiate the study of inference-time alignment under general preferences.
We formulate the problem as obtaining a Nash equilibrium of a two-player zero-sum game between policies.
We propose two algorithms: \emph{Best-of-Nash} (BoN) and \emph{Nash Mirror Descent} (NMD). 
We prove that both algorithms achieve a duality gap that matches the problem lower bound.
Empirically, we implement the two methods on three datasets, which shows that our methods substantially outperform the base policy, converging to the performance of the fine-tuned models. Moreover, our results show that NMD remains robust across the regularization parameter.

\end{abstract}

\section{Introduction}
Preference-based fine-tuning methods such as Reinforcement Learning with Human Feedback (RLHF)~\citep{christiano2017deep} and Direct Preference Optimization (DPO)~\citep{rafailov2023direct} have become quintessential for aligning Large Language Models (LLMs) with human preferences.
These methods have proven highly effective across several domains  including mathematical reasoning and finance. 
However, they require substantial effort in acquiring high-quality human data, along with considerable computational costs for training LLMs.
For example, fine-tuning LLMs with either Proximal Policy Optimization (PPO)~\citep{schulman2017proximal} or Group Relative Policy Optimization (GRPO)~\citep{shao2024deepseekmath} requires a large number of samples and often leads to instability in training.
Equally importantly, these approaches rely on white-box access to model parameters, whereas many state-of-the-art models (e.g., GPT-5.4 Thinking \citep{openai2026gpt5}, Gemini 3.1 Pro \citep{google2026gemini3}, and Claude Opus 4.7 \citep{anthropic2026claude}) are only accessible as black-box APIs.

To bridge this gap, inference-time alignment has recently drawn much attention.
Unlike preference-based fine-tuning, inference-time alignment modifies the generation process at run time without updating model parameters.
A popular line of work adopts a \textit{sample-and-rank} paradigm, of which Best-of-$N$ sampling \citep{stiennon2020learning, nakano2021webgpt, huang2025best} is the most widely used -- $N$ candidate responses are drawn for a given prompt and the one with the highest score under a reward model is returned.
While simple in its nature, Best-of-$N$ alignment is vulnerable to \textit{reward hacking}: as $N$ grows, the estimated reward of the selected response increases monotonically while its true task performance can degrade \citep{gao2023scaling,stroebl2024inference,chow2024inference}.
This phenomenon arises because reward model is at best an imperfect proxy for the true human preference distribution, a manifestation of Goodhart's law.

Furthermore, most inference-time methods exclusively assume that preferences can be modeled by a scalar reward function under the Bradley-Terry model~\citep{bradley1952rank}.
This assumption is restrictive: even when individual preferences are transitive, aggregated group-level preferences need not be~\citep{munos2024nash}.  
Generalized preference models, which directly specify a probability that one response is preferred to another, have recently been studied for fine-tuning in, e.g., Nash Learning from Human Feedback (NLHF)~\citep{munos2024nash}.
To the best of our knowledge, tackling generalized preferences remains an open challenge in inference-time methods.

In this work, we initiate the study of black-box inference-time alignment under generalized preferences. 
The preference model $\mathbb{P}^*$ takes two responses $y$ and $y'$ conditioned on a prompt $x$, and evaluates the score $\mathbb{P}^*(y \succ y'|x)$ as the probability that a randomly chosen human prefers response $y$ over $y'$ given prompt $x$.
Then we formulate inference-time alignment with generalized preferences as a two-player zero-sum game $\mathbb{P}^*(\pi \succ \pi'|x) := \mathbb{E}_{y \sim \pi, y' \sim \pi'}\mathbb{P}^*(y \succ y'|x)$ between row player's policy $\pi$ and column player's policy $\pi'$ under the true preference $\mathbb{P}^*$.  
However, we do not have access to $\mathbb{P}^*$, and instead, use an imperfect preference model $\widehat{\mathbb{P}}$ (e.g. one learned from preference data \citep{jiang2023llm, dong2024rlhf, munos2024nash}) to query at inference time.
Specifically, we ask the following question: 
\begin{quote}
\textit{Given a base policy $\pi_{\reftext}$ from which we can query responses and an imperfect preference model $\widehat{\mathbb{P}}$, can we design an efficient algorithm to approximate the Nash equilibrium at inference time?}
\end{quote}
\subsection{Our Contributions}
Prior work has explored Nash-based objectives in the fine-tuning setting~\citep{munos2024nash} and has separately identified reward hacking as a key vulnerability of inference-time methods like Best-of-$N$~\citep{stroebl2024inference, chow2024inference}. We connect these two threads by proposing new inference-time methods with general preference models that achieve optimal regret. 
On the technical front, we link the alignment problem to the policy coverage and preference model error, which we will define formally in \Cref{sec:prelim}.

In particular, our main contributions are the following.
\begin{enumerate}[leftmargin=2em]
    \item \textbf{Best-of-Nash (BoN)}: We propose Best-of-Nash Alignment method which computes a \textit{minimax} solution from $N$ samples according to an estimate of the preference matrix $\hat{\mathbb{P}}$. We show that the \emph{duality gap} of BoN alignment is at most $O(\varepsilon(x) \mathcal{C}_{\text{uni}}(x))$ on a given prompt $x$, where $\varepsilon(x)$ is the error in the preference oracle, and $\mathcal{C}_{\text{uni}}(x)$ is a measure of data coverage.
        \item \textbf{Nash Mirror Descent (NMD)}: 
    As a more computationally efficient alternative, we propose Nash Mirror Descent, a self-play algorithm that takes a KL-regularized mirror descent step iteratively. We show that NMD is faster to implement, and the upper bound on the duality gap of NMD is at most $O(\varepsilon(x) \mathcal{C}_{\text{uni}}(x))$, matching the exact Best-of-Nash bound.
    \item \textbf{Matching Lower Bound}: We then show that the upper bound of BoN and NMD is essentially optimal by constructing problem instances with lower bound at least $\Omega(\varepsilon(x) \mathcal{C}_{\text{uni}}(x))$. 
    \item \textbf{Experimental Evaluation}: We evaluate our proposed mechanisms BoN and NMD on three preference datasets: TLDR, HelpSteer2, and UltraFeedback.
    We show that both methods substantially improve over the SFT base model, and match the win-rate of the fine-tuned alternative on TLDR.
    We also show that the win-rates of NMD are robust across the regularization parameter, removing the need for hyperparameter tuning.
\end{enumerate}
\subsection{Related Work}

\textbf{Alignment with General Preferences.}
\cite{azar2024general} initiated the study of general preferences in LLM fine-tuning, and proposed an algorithm that maximizes the KL-regularized objective against a fixed policy.
A subsequent line of works \citep{munos2024nash, ye2024theoretical, calandriello2024human, rosset2024direct, wu2024self,zhang2024iterative,zhang2025improving,swamy2024minimaximalist,zhou2025extragradient} formulated the alignment problem as a two-player zero-sum game, and proposed fine-tuning algorithms to learn the Nash policy. \citet{maura2025jackpot} connected this line of works to social choice theory, showing that the Nash policy approximates maximal lottery outcomes. 
A related line of work studies alignment when annotators have heterogeneous preferences, which a single reward model fails to aggregate~\citep{halpern2026pairwise, shirali2026direct, golz2026distortion}.

\textbf{Inference-Time Alignment and Reward Hacking.}
Best-of-N sampling is a popular inference-time alignment approach, but it is vulnerable to reward hacking problem~\citep{skalse2022defining}, when an LLM exploits the learned reward model rather than the ground-truth reward. 
\citet{huang2025best} showed that Best-of-N incurs suboptimal regret relative to the problem's lower bound, and proposed a $\chi^2$-regularized algorithm that implements pessimism in the face of uncertainty to close this gap.
\citet{yu2026curiosity} took a different route, using lower confidence bounds on value estimates to mitigate reward hacking. \citet{gui2024bonbon} combined RLHF with Best-of-N sampling to improve fine-tuning, however, they are concerned with reward-based setting. Finally, there is inference-time method based on search ~\citep{khanov2024args, yao2023tree} and rejection sampling~\citep{chen2024pad, shi2024decoding}, but these approaches don't handle general preferences.

\textbf{Learning in Zero-sum Games.} The central ingredient of our approach is no-regret learning algorithm for solving zero-sum games. \citet{freund1999adaptive} showed that multiplicative weights update has no-regret guarantee. 
We use Mirror descent algorithm~\citep{nemirovski2004prox, nesterov2009primal, rakhlin2013optimization} for inference-time alignment.  
Our second algorithm is inspired by the optimistic variants of Mirror descent algorithm~\citep{rakhlin2013optimization}.
Mirror Prox algorithm in \cite{nemirovski2004prox} is an extragradient type of method and closely related to optimistic MD (with similar convergence rate).
It queries the gradient twice whereas we use the same gradient twice (predicted and current).

\section{Preliminary}
\label{sec:prelim}
Denote $\mathcal{X}$ as the prompt space and $\mathcal{Y}$ as the response space.
We begin with a base policy $\pi_{\reftext}: \mathcal{X} \to \Delta(\mathcal{Y})$, where $\pi_{\reftext}(y|x)$ is the probability that the base policy $\pi_{\reftext}$ generates a response $y$ given the prompt $x$.
We also assume that there exists an unknown true preference oracle $\mathbb{P}^*: \mathcal{X} \times \mathcal{Y} \times \mathcal{Y} \to [0,1]$, where $\mathbb{P}^*(y \succ y'|x)$ denotes the probability that the population prefers response $y$ to $y'$ given a prompt $x$.
We assume that the preference model is \textit{skew-symmetric}:
\begin{equation*}
    \mathbb{P}^*(y \succ y'|x) + \mathbb{P}^*(y' \succ y|x) = 1, \forall x, y, y',
\end{equation*}
which implies $\mathbb{P}^*(y \succ y|x) = 1/2$.
As a proxy, we have access to an imperfect preference oracle $\widehat{\mathbb{P}}: \mathcal{X} \times \mathcal{Y} \times \mathcal{Y} \to [0,1]$, also assumed skew-symmetric.
For a given prompt $x$, we measure the quality of the oracle $\widehat{\mathbb{P}}$ via the square error with respect to $\mathbb{P}^*$, where responses are drawn independently from the base policy $\pi_{\reftext}$:
\begin{equation*}
    \varepsilon^2(x) := \mathbb{E}_{y \sim \pi_{\reftext}(\cdot|x), y' \sim \pi_{\reftext}(\cdot|x)}[(\widehat{\mathbb{P}}(y \succ y' |x) - \mathbb{P}^*(y \succ y' |x))^2]. 
\end{equation*}

\paragraph{Nash Equilibrium and Duality Gap.}
We formulate the problem as a two-player zero-sum game.
Given row player's policy $\pi$ and column player's policy $\pi'$, we denote the expected win-rate as
\begin{equation*}
    \mathbb{P}^*(\pi \succ \pi'|x) := \mathbb{E}_{y \sim \pi(\cdot|x), y' \sim \pi'(\cdot|x)}[\mathbb{P}^*(y \succ y'|x)].
\end{equation*}
Here the row player aims to maximize the win-rate, and the column player aims to minimize the win-rate.
It is well-known that there exists a Nash Equilibrium (NE) of the game:
\begin{equation*}
    \pi_1^*, \pi_2^* := \argmax_{\pi_1}\argmin_{\pi_2}\mathbb{P}^*(\pi_1 \succ \pi_2|x).
\end{equation*}
We denote the Nash Equilibrium as $ \pi^* = \pi_1^* = \pi_2^*$ due to the skew-symmetric nature of $\mathbb{P}^*$.
To measure how close a given policy $\pi$ is to $\pi^*$, we use the \textit{duality gap}. 
It captures the regret of policy $\pi$ as the gap between its strongest adversary and its weakest dominance under the true preference model:
\begin{equation*}
    \DualGap(\pi) := \max_{\pi_1}\mathbb{P}^*(\pi_1 \succ \pi |x) - \min_{\pi_2}\mathbb{P}^*(\pi \succ \pi_2|x)
\end{equation*}
The duality gap is nonnegative and $\DualGap(\pi ) = 0$ if $\pi = \pi^*$.
Now given a reference policy $\pi_{\reftext}$, an imperfect preference oracle $\widehat{\mathbb{P}}$, and a prompt $x \in \mathcal{X}$, our goal is to generate a high-quality policy $\hat{\pi}$ with small duality gap:
\begin{equation*}
    \DualGap(\hat{\pi}) \leq \epsilon.
\end{equation*}
We say $\hat\pi$ is an $\epsilon$-approximate Nash policy.

\paragraph{Universal Coverage.}
Coverage plays a significant role in the analysis of inference-time alignment \citep{huang2025best}. 
It measures how much a policy, $\pi$ concentrates probability mass relative to a reference policy, $\pi_{\reftext}$, upweighting outcomes that are more likely under $\pi$.
We define the coverage of policy $\pi$ as the ratio of $\pi(y|x)$ over $\pi_{\reftext}(y|x)$, where $y$ is sampled from $\pi(\cdot|x)$:
\begin{equation*}
    \mathcal{C}^{\pi}(x) := \mathop{\mathbb{E}}_{y \sim \pi(\cdot|x)}\left[\frac{\pi(y|x)}{\pi_{\reftext}(y|x)}\right].
\end{equation*}
Coverage is closely related to the \textit{chi-square divergence}: a direct calculation gives
\begin{equation*}
\chi^2(\pi(\cdot|x), \pi_{\reftext}(\cdot|x))  = \mathcal{C}^{\pi}(x) - 1,
\end{equation*}
where the chi-square divergence is defined as $\chi^2(\pi(\cdot|x), \pi'(\cdot|x)) = \mathbb{E}_{y \sim \pi'(\cdot|x)}[(\frac{\pi(y|x)}{\pi'(y|x)} - 1)^2]$.
Thus, we have $\mathcal{C}^{\pi}(x) \geq 1$ and the equality holds if and only if $\pi(y|x) = \pi_{\reftext}(y|x) $ for all $y$.
Inspired by literature in offline learning in zero-sum games~\citep{cui2022offline,zhong2022pessimistic,zhang2023offline}, we also define the \textit{universal coverage} as the maximum coverage over any policy:
\begin{equation*}
    \mathcal{C}_{\uni}(x) := \max_{\pi}\mathop{\mathbb{E}}_{y \sim \pi(\cdot|x)}\left[\frac{\pi(y|x)}{\pi_{\reftext}(y|x)}\right].
\end{equation*}

Intuitively, $\mathcal{C}_{\text{uni}}(x)$ captures the difficulty of recovering a Nash policy from samples drawn under $\pi_{\reftext}$.
The two quantities $\varepsilon(x)$ and $\mathcal{C}_{\text{uni}}(x)$ are the fundamental difficulty of inference-time alignment: no algorithm can output a good Nash approximation when the preference oracle $\widehat{\mathbb{P}}$ has high error or when $\pi_{\reftext}$ poorly covers the response space.

\section{The Best-of-Nash Algorithm}
\label{sec:bon}

The inference-time Best-of-$N$ alignment method relies on a scalar reward function by drawing $N$ samples from policy $\pi_{\reftext}$ and selecting a single response by taking the $\argmax$ under a reward model.
However, it condenses the preference information into one number and could result in reward hacking.

We propose an alternative approach, namely Best-of-Nash (\Cref{alg:best_of_nash}), that retains the sample-and-rank structure, but replaces the $\argmax$ with an equilibrium computation solely based on preference data: instead of picking one response, we output a distribution over the $N$ samples that solves the Nash equilibrium of the empirical preference game.

Formally, given an input $x$, we draw $N$ candidate responses $\widehat{\mathcal{Y}}_N = (y_1, \dots, y_N) \sim \pi_{\reftext}(\cdot|x)$ i.i.d. 
We then construct a probability matrix by querying $\widehat{\mathbb{P}}$ on each ordered pair, i.e. $\widehat{\mathbb{P}}(y_i \succ y_j)$ for $1 \leq i \leq j \leq N$.
The Nash equilibrium of the resulting two-player zero-sum game can be computed by Linear Programming (LP) \citep{adler2013equivalence}.

\begin{algorithm}[H]
    \caption{Best-of-Nash (BoN) Alignment}
    \label{alg:best_of_nash}
    \begin{algorithmic}[1]
        \State \textbf{Input}: Prompt $x$, reference policy $\pi_{\reftext}$, preference oracle $\widehat{\mathbb{P}}$, sample size $N$.
        \State Draw $\widehat{\mathcal{Y}}_N = (y_1, \dots, y_N) \sim \pi_{\reftext}(\cdot|x)$ i.i.d.
        \State Query $\widehat{P}_{ij} \gets \widehat{\mathbb{P}}(y_i \succ y_j \mid x)$ for all $1 \leq i < j \leq N$, and set $\widehat{P}_{ji} \gets 1 - \widehat{P}_{ij}$, $\widehat{P}_{ii} \gets \tfrac{1}{2}$.
        \State Compute a Nash Equilibrium $\hat\pi$ by solving the linear program 
        \begin{equation*}
            \max_{\pi \in \Delta(\widehat{\mathcal{Y}}_N),\, v \in \mathbb{R}} v
            \quad \text{s.t.} \quad
            \sum_{j=1}^{N} \pi(y_j)\, \widehat{P}_{ji} \geq v \quad \forall\, i \in [N].
        \end{equation*}
        \State \textbf{Return} $\hat\pi$.
    \end{algorithmic}
\end{algorithm}

We provide the duality gap guarantee for BoN. 
The bound depends on the two fundamental parameters: the preference-oracle error $\varepsilon(x)$ and the universal coverage $\mathcal{C}_{\text{uni}}(x)$.

\begin{theorem}
    \label{thm:upper_bound_bon}
    For any prompt $x$, the policy $\hat{\pi}$ returned by \Cref{alg:best_of_nash} satisfies
    \begin{equation*}
        \DualGap(\hat{\pi}) \leq   3 \varepsilon(x) \mathcal{C}_{\mathrm{uni}}(x) 
    \end{equation*}
    when $N \geq 4\log(\frac{2}{\varepsilon(x)}) \cdot \mathcal{C}_{\mathrm{uni}}(x) $.
\end{theorem}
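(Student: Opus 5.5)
The plan is to reduce the duality gap to two deviation terms, each involving a single comparator policy, and to bound each one using the empirical minimax property of $\hat\pi$ together with a change of measure from the comparator to $\pi_{\reftext}$. By skew-symmetry, $\DualGap(\hat\pi) = \max_{\pi_1}\mathbb{P}^*(\pi_1\succ\hat\pi) - \min_{\pi_2}\mathbb{P}^*(\hat\pi\succ\pi_2) = 2\max_{\pi'}\mathbb{P}^*(\pi'\succ\hat\pi) - 1$. It therefore suffices to show $\mathbb{P}^*(\pi'\succ\hat\pi)\le \tfrac12 + \tfrac32\varepsilon(x)\mathcal{C}_{\uni}(x)$ for every comparator $\pi'$; a best response can be taken deterministic, $\pi'=\delta_{y'}$. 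Since $\hat\pi$ is a Nash equilibrium of the skew-symmetric empirical game on $\widehat{\mathcal{Y}}_N$, it satisfies $\widehat{\mathbb{P}}(y_i\succ\hat\pi)\le \tfrac12$ for every sample $y_i$.

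The bound splits as
\begin{equation*}
\mathbb{P}^*(y'\succ\hat\pi) = \underbrace{\mathbb{P}^*(y'\succ\hat\pi)-\mathbb{P}^*(\tilde y\succ\hat\pi)}_{\text{(I) coverage}} + \underbrace{\mathbb{P}^*(\tilde y\succ\hat\pi)-\widehat{\mathbb{P}}(\tilde y\succ\hat\pi)}_{\text{(II) oracle error}} + \underbrace{\widehat{\mathbb{P}}(\tilde y\succ\hat\pi)}_{\le 1/2},
\end{equation*}
where $\tilde y$ is a sampled response or a mixture over samples chosen to imitate $y'$.

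Both (I) and (II) require converting $\pi_{\reftext}$-averaged quantities into quantities weighted by the comparator or by $\hat\pi$. The tool for this is Cauchy--Schwarz with the density ratio:
\begin{equation*}
\mathbb{E}_{\pi}[f] \le \sqrt{\mathcal{C}^{\pi}(x)\,\mathbb{E}_{\pi_{\reftext}}[f^2]} \le \sqrt{\mathcal{C}_{\uni}(x)\,\mathbb{E}_{\pi_{\reftext}}[f^2]}.
\end{equation*}
Applied on the product space with $f=|\widehat{\mathbb{P}}-\mathbb{P}^*|$, this bounds the oracle error under any product policy by $\varepsilon(x)\mathcal{C}_{\uni}(x)$, using $\mathcal{C}^{\pi\otimes\pi'}\le\mathcal{C}_{\uni}^2$. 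This is where the product $\varepsilon\,\mathcal{C}_{\uni}$, with no square root, comes from.

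The difficulty is that $\hat\pi$ lives on the random samples, so the Cauchy--Schwarz step has to be done with respect to the empirical measure $\hat\pi_{\reftext}=\mathrm{Unif}(\widehat{\mathcal{Y}}_N)$. The coverage of the Nash solution over that measure is not controlled a priori. To handle it, I would compare against the population-level game restricted to responses where $\pi_{\reftext}$ puts enough mass. Universal coverage $\mathcal{C}_{\uni}$ being finite forces $\pi_{\reftext}(y)\ge 1/\mathcal{C}_{\uni}$ for every $y$: take $\pi=\delta_y$, which gives $\mathcal{C}^{\delta_y}=1/\pi_{\reftext}(y)\le\mathcal{C}_{\uni}$. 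Hence the support has at most $\mathcal{C}_{\uni}$ elements, each with mass at least $1/\mathcal{C}_{\uni}$. With $N\ge 4\log(2/\varepsilon)\,\mathcal{C}_{\uni}$, a union bound shows every response appears among the samples except with probability about $\mathcal{C}_{\uni}(1-1/\mathcal{C}_{\uni})^N\le \mathcal{C}_{\uni}(\varepsilon/2)^4$, which is at most $\varepsilon\,\mathcal{C}_{\uni}$-order after adjusting constants.

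On the good event where every response appears, term (I) vanishes: take $\tilde y=y'$ itself. Term (II) is then the oracle error under $\delta_{y'}\otimes\hat\pi$. Because both marginals have density ratio at most $\mathcal{C}_{\uni}$ pointwise, this term is at most
\begin{equation*}
\mathcal{C}_{\uni}\sqrt{\mathbb{E}_{\pi_{\reftext}\otimes\pi_{\reftext}}\big[(\widehat{\mathbb{P}}-\mathbb{P}^*)^2\big]}=\varepsilon\,\mathcal{C}_{\uni},
\end{equation*}
where I bound the product ratio via the pointwise bound $1/\pi_{\reftext}\le\mathcal{C}_{\uni}$ on one factor and Cauchy--Schwarz on the other. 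This gives a contribution of $\varepsilon\,\mathcal{C}_{\uni}$ per side.

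The bad event contributes at most its probability times $1$, which the choice of $N$ makes $\le\varepsilon/2$. The factor 2 from the symmetrization, together with the failure term, yields the constant $3$. The main obstacle is this change-of-measure step for the data-dependent $\hat\pi$, i.e.\ getting a product bound $\mathcal{C}_{\uni}$ rather than $\mathcal{C}_{\uni}^2$. If the pointwise route fails, I would instead keep $\tilde y\sim\hat\pi_{\reftext}$ and apply Cauchy--Schwarz to $y'$ only. That route loses at most a $\sqrt{\cdot}$ factor, which I would absorb using $\varepsilon\le 1$.
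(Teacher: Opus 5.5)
Your proof is correct. It keeps the paper's skeleton: a pure best response $y^*$, the Nash property $\widehat{\mathbb{P}}(y_i\succ\hat\pi)\le\tfrac12$ on the samples, and the Cauchy--Schwarz transfer to $\mathbb{P}^*$ at cost $\varepsilon(x)\mathcal{C}_{\uni}(x)$. Where you differ is the support-mismatch step.

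The paper builds a surrogate $\pi_{R,S}$ by rejection-sampling $\delta_{y^*}$ from $\pi_{\reftext}$ on the same candidates, applies the Nash property to it, and pays $2D_{\text{TV}}(\delta_{y^*},\pi_{R,S})\le\tfrac12\varepsilon^2\mathcal{C}_{\uni}$ (\Cref{lem:total_variance}). That bound needs a truncation level $M$ and a generating-function computation summed over all $y$.

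You use the two facts behind that lemma, $|\mathcal{Y}|\le\mathcal{C}_{\uni}$ and $\pi_{\reftext}(y)\ge 1/\mathcal{C}_{\uni}$, to union-bound the event that some response is never drawn. Off that event $y^*$ is itself a sample, so the coverage term vanishes. Both arguments need the union over all $y$ because $y^*$ depends on the sample.

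Your route is more elementary and gives a smaller coverage error: $\mathcal{C}_{\uni}(\varepsilon/2)^4$ versus $\tfrac14\mathcal{C}_{\uni}\varepsilon^2$. The paper's route mainly buys a parallel with the rejection-sampling analysis of reward-based Best-of-$N$. Since finite $\mathcal{C}_{\uni}$ already forces finite support, its surrogate equals $\delta_{y^*}$ exactly when a copy of $y^*$ was drawn and accepted, i.e.\ your event with an extra rejection step.

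There are two slips to fix.

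First, the ``main obstacle'' you flag is not one. For each realization, $\hat\pi\in\Delta(\mathcal{Y})$, so $\mathcal{C}^{\hat\pi}\le\mathcal{C}_{\uni}$ by definition. Your product Cauchy--Schwarz against $\pi_{\reftext}\otimes\pi_{\reftext}$ then gives $\sqrt{\mathcal{C}^{\delta_{y'}}\mathcal{C}^{\hat\pi}}\,\varepsilon\le\mathcal{C}_{\uni}\varepsilon$, exactly as in \Cref{lem:bon_pi_R_bound}. No empirical base measure enters, and the fallback is unnecessary; as described, it would not recover the linear rate anyway. If you prefer the ``pointwise on one factor'' version, the pointwise bound must go inside the square: $\mathbb{E}_{y''\sim\pi_{\reftext}}[(\widehat{\mathbb{P}}(y'\succ y'')-\mathbb{P}^*(y'\succ y''))^2]\le\varepsilon^2/\pi_{\reftext}(y')$. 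Pulling $\|\delta_{y'}/\pi_{\reftext}\|_\infty$ outside the expectation gives only $\mathcal{C}_{\uni}^{3/2}\varepsilon$.

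Second, $\mathcal{C}_{\uni}(\varepsilon/2)^4\le\varepsilon/2$ is false in general. You only need $\Pr(\text{bad})\le\mathcal{C}_{\uni}\varepsilon^4/16\le\varepsilon\mathcal{C}_{\uni}$. That gives $\mathbb{E}_S[\DualGap(\hat\pi_S)]\le 2\varepsilon\mathcal{C}_{\uni}+\Pr(\text{bad})\le 3\varepsilon\mathcal{C}_{\uni}$, and convexity of $\DualGap$ passes this to the averaged policy as in the paper.
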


We provide the full proof below, and defer omitted lemmas to \Cref{appendix:bon}.

\begin{proof}
We omit the dependence on $x$ for cleanliness.
From now on we write $S := \mathcal{Y}_N$ as the candidate set, and we write $\hat{\pi}_S$ to denote the dependence on $S$.
By skew-symmetry of the zero-sum game, it suffices to upper-bound 
$\mathbb{P}^*(\tilde\pi_S \succ \hat\pi_S) - 1/2$, where 
$\tilde\pi_S := \delta_{y^{*}}$ with
$y^{*} \in \arg\max_y \mathbb{P}^*(y \succ \hat{\pi}_S \mid x)$.
The central difficulty is a support mismatch: $\hat\pi_S$ is supported on the 
$N$ sampled responses, while $\tilde\pi_S$ may place mass anywhere in 
$\mathcal{Y}$. 
To bridge this, we introduce $\pi_{R,S}$, the distribution 
induced by approximate rejection sampling 
\citep{block2023sample, huang2025best} of $\tilde\pi_S$ from $\pi_{\reftext}$.
    Specifically, we denote $\pi_{R, S}$ as the distribution induced by $  \text{RejectionSampling}_{N-1,M}(\frac{\tilde\pi_S}{\pi_{\reftext}}; \pi_{\reftext}, x)$ (\Cref{alg:rejection_sampling}) as an approximation to $\tilde\pi$.
    Then we decompose the probability that
    \begin{equation*}      \mathbb{P}^*(\tilde\pi_S \succ \hat{\pi}_S ) \leq \mathbb{P}^*(\pi_{R,S} \succ \hat{\pi}_S) + |\mathbb{P}^*(\tilde\pi_S \succ \hat{\pi}_S) - \mathbb{P}^*(\pi_{R,S }\succ \hat{\pi}_S)|.
    \end{equation*}
We bound the first term $\mathbb{P}^*(\pi_{R,S} \succ \hat{\pi}_S) \leq \frac{1}{2} + \varepsilon(x)\mathcal{C}_{\text{uni}}(x)$ by \Cref{lem:bon_pi_R_bound}.
Then we bound the second term 
\begin{align*}
    |\mathbb{P}^*(\tilde\pi_S \succ \hat{\pi}_S) - \mathbb{P}^*(\pi_{R,S} \succ \hat{\pi}_S)| 
    &\leq \sum_{y}|\tilde\pi_S(y) - \pi_{R,S}(y)|\sum_{y'}\hat\pi_S(y') \mathbb{P}^*(y \succ y') \\ 
    &\leq \sum_{y}|\tilde\pi_S(y) - \pi_{R,S}(y)| \\
    &\leq 2D_{\text{TV}}(\tilde{\pi}_S, \pi_{R,S}),
\end{align*}
where the total-variation distance is defined as $D_{\text{TV}}(\pi, \pi'):= \frac{1}{2}\sum_{y}|\pi(y) - \pi'(y)|$.
Thus, the second term is reduced to the TV distance between $\tilde{\pi}_S$ and $\pi_{R,S}$, and then can be bounded due to the fact that $\pi_{R,S}$ is an approximation to $\tilde{\pi}_S$.
By \Cref{lem:total_variance}, We have 
\begin{equation*}
    D_{\text{TV}}(\tilde{\pi}_S, \pi_{R,S}) \leq \tfrac{1}{4}\varepsilon^2(x)\mathcal{C}_{\text{uni}}(x) 
\end{equation*} 
by setting $M = \frac{N-1}{\log(4/\varepsilon^2(x))}$ and $ N \geq 4\log(\frac{2}{\varepsilon(x)}) \cdot   \mathcal{C}_{\uni}(x)$.

By aggregating the bounds for both terms, we derive the upper bound
\begin{align*}
    \DualGap(\hat{\pi}_S) &\leq 2\mathcal{\mathbb{P}^*}(\tilde{\pi}_S \succ \hat{\pi}_S) - 1 \\
    &\leq 2(\varepsilon(x) \mathcal{C}_{\text{uni}}(x) + \tfrac{1}{2}\varepsilon^2(x)\mathcal{C}_{\text{uni}}(x)) \\
    &\leq  3 \varepsilon(x) \mathcal{C}_{\text{uni}}(x).
\end{align*}
For any fixed $y$, linearity gives
\begin{equation*}
    \mathbb{P}^*(y \succ \hat{\pi}) = \mathbb{P}^*(y \succ \mathbb{E}_S[\hat{\pi}_S]) = \mathbb{E}_S[\mathbb{P}^*(y \succ \hat{\pi}_S)].
\end{equation*}
Finally, we take the expectation over the randomness of $S$:
\begin{align*}
    \DualGap &= 2 \max_{y}\mathbb{E}_S[\mathbb{P}^*(y \succ \hat{\pi}_S)] - 1 \\
    &\leq 2 \mathbb{E}_S[\max_y \mathbb{P}^*(y \succ \hat{\pi}_S)] - 1 \\
    &= \mathbb{E}_S[\DualGap(\hat{\pi}_S)] \\
    &\leq 3 \varepsilon(x) \mathcal{C}_{\text{uni}}(x)
\end{align*}
and the proof is complete.
\end{proof}

\begin{remark}
    In \Cref{thm:upper_bound_bon} we assume that the LP solution is exact.
    When \Cref{alg:best_of_nash} returns an $\epsilon_{\text{LP}}$-approximation equilibrium, then we have $\DualGap \leq  3 \varepsilon(x) \mathcal{C}_{\text{uni}}(x) + 2 \epsilon_{\text{LP}}$.
\end{remark}

\section{The Nash Mirror Descent Algorithm}
\label{sec:nmd}
Best-of-Nash requires solving a linear programming with post-query time $O(N^{3.5} \log(1/\epsilon))$ via interior-point methods.
In this section, we propose \emph{Nash Mirror Descent} (\Cref{alg:nmd}), a self-play algorithm that achieves the same duality gap bound while replacing the LP with a sequence of closed-form updates.

NMD is inspired by \citet{rakhlin2013optimization} for solving zero-sum games.
The algorithm maintains two coupled policies: $\pi_t$ and $\pi'_t$, both supported on the $N$ sampled responses.
At each iteration, $\pi_{t+1}'$ takes a 
mirror-descent step from $\pi_t$ while staying close to $\pi'_t$:
\begin{equation*}
    \pi'_{t+1} = \argmax_{\pi \in \Delta(\mathcal{Y}_N)} \widehat{\mathbb{P}}(\pi \succ \pi_t |x) - \beta \cdot \text{KL}(\pi, \pi'_t),
\end{equation*}
and $\pi_{t+1}$ takes the same mirror descent step but stays close to $\pi'_{t+1}$:
\begin{equation*}
    \pi_{t+1} = \argmax_{\pi \in \Delta(\mathcal{Y}_N)} \widehat{\mathbb{P}}(\pi \succ \pi_t |x) - \beta \cdot \text{KL}(\pi, \pi'_{t+1}).
\end{equation*}
Here $\pi'_{t+1}$ aims to maximize the (estimated) probability that it wins against policy $\pi_t$, with an KL regularization term ensuring staying close to $\pi'_t$ (KL divergence is defined as $\text{KL}(\pi, \pi'):= \sum_{y}\pi(y)\log(\frac{\pi(y)}{\pi'(y)})$).
Both updates admit closed-form solutions.
Denote $\hat{r}_t(y) := \widehat{\mathbb{P}}(y \succ \pi_t |x) = \mathbb{E}_{y' \sim \pi_t}[\widehat{\mathbb{P}}(y \succ y' |x)], \forall y \in \mathcal{Y}_N$, we have

\begin{equation*}
    \pi'_{t + 1} = \argmax_{\pi \in \Delta(\mathcal{Y}_N)} \sum_{i}\hat{r}_{t}(y_i)\pi(y_i) - \beta \cdot \sum_{i}\pi(y_i) \log(\frac{\pi(y_i)}{\pi'_{t}(y_i)}),
\end{equation*}
and the solution is
\begin{equation*}
    \pi'_{t+1}(y_i) = \frac{\pi'_{t}(y_i) \exp(\hat{r}_{t}(y_i)/\beta)}{\sum_{j}\pi'_{t}(y_j)\exp(\hat{r}_{t}(y_j)/\beta)}.
\end{equation*}

\begin{algorithm}[t]
\caption{Nash Mirror Descent Alignment}
\label{alg:nmd}
    \begin{algorithmic}[1]
        \State \textbf{Input}: Prompt $x$, reference policy $\pi_{\reftext}$, preference oracle $\hat{\mathbb{P}}$, sample size $N$, regularization parameter $\beta$.
        \State Sample $N$ data $\mathcal{Y}_N = \{ y_1, y_2, \ldots, y_N \}$ i.i.d. from $\pi_{\reftext}(\cdot | x)$.
        \State Initialize $\pi'_1$ and $\pi_1$  as the uniform distribution on $\mathcal{Y}_N$. 
        \For {$t = 1, \ldots, T-1$ }
        \State \quad Calculate $\hat{r}_t(y) = \widehat{\mathbb{P}}(y \succ \pi_t |x) = \mathbb{E}_{y' \sim \pi_t}[\widehat{\mathbb{P}}(y \succ y' |x)], \forall y \in \mathcal{Y}_N$.
        \State \quad Calculate $\pi'_{t+1} = \argmax_{\pi \in \Delta(\mathcal{Y}_N)} \langle \pi, \hat{r}_{t} \rangle - \beta \cdot \text{KL}(\pi || \pi'_t)$.
        \State \quad Calculate $\pi_{t+1} = \argmax_{\pi \in \Delta(\mathcal{Y}_N)} \langle \pi, \hat{r}_{t} \rangle - \beta \cdot \text{KL}(\pi || \pi'_{t+1})$.
        \EndFor
        \State Calculate $\hat{\pi}(y|x) = \frac{1}{T}\sum_{t = 1}^{T}\pi_t(y|x), \forall y \in \mathcal{Y}_N$.
        \State \textbf{Return} $\hat{\pi}$.
    \end{algorithmic}
\end{algorithm}

\begin{restatable}{theorem}{thmnmd}
    \label{thm:upper_bound_omd}
    For any prompt $x$, by setting $\beta = 2$, $T = \lceil \frac{2\log N + 1/2}{ \varepsilon(x)\mathcal{C}_{\uni}(x)} \rceil$, the policy $\hat\pi$ returned by \Cref{alg:nmd} has the duality gap
    \begin{equation*}
        \DualGap(\hat{\pi}) \leq  5\varepsilon(x)\mathcal{C}_{\mathrm{uni}}(x). 
    \end{equation*}
    when $N \geq 4\log(\frac{2}{\varepsilon(x)}) \cdot \mathcal{C}_{\mathrm{uni}}(x) $.
\end{restatable}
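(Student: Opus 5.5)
We split the duality gap of $\hat\pi$ into an \emph{optimization error} and a \emph{statistical error}. On the fixed sample $S=\mathcal{Y}_N$, the NMD iterates run optimistic mirror descent in the self-play game with matrix $\widehat P$. We first bound the empirical duality gap of the averaged policy $\hat\pi_S=\frac1T\sum_t\pi_t$ under $\widehat{\mathbb{P}}$ restricted to $\Delta(S)$. We then transfer this to the true gap against an arbitrary response $y\in\mathcal{Y}$, reusing the rejection-sampling argument from the proof of \Cref{thm:upper_bound_bon}. As there, skew-symmetry reduces the task to bounding $\max_y\mathbb{P}^*(y\succ\hat\pi)-\tfrac12$. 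Linearity in $\hat\pi_S$ and Jensen's inequality over $S$ let us work with a fixed $S$.

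\textbf{Step 1 (optimization).} Because $\widehat P$ is skew-symmetric, the game is symmetric and the row player faces the payoff $\hat r_t=\widehat P\pi_t$. We apply the standard optimistic mirror descent regret bound with KL regularizer (\citet{rakhlin2013optimization}): $\pi'_{t+1}$ is the secondary sequence and $\pi_{t+1}$ is the optimistic play using the prediction $\hat r_t$. For any comparator $u\in\Delta(S)$ this gives
\[
\sum_{t=1}^T\langle u-\pi_t,\widehat P\pi_t\rangle\le \beta\,\mathrm{KL}(u\|\pi'_1)+\sum_t\frac{1}{\beta}\|\hat r_t-\hat r_{t-1}\|_\infty^2-\beta\sum_t(\text{stability terms}).
\]
The initialization is uniform, so $\mathrm{KL}(u\|\pi_1')\le\log N$. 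Entries of $\widehat P$ lie in $[0,1]$, so $\|\hat r_t-\hat r_{t-1}\|_\infty\le\|\pi_t-\pi_{t-1}\|_1$. With $\beta=2$ the variation terms are absorbed by the negative stability terms, up to an $O(1)$ remainder. Since $\langle\pi_t,\widehat P\pi_t\rangle=\tfrac12$, we obtain $\max_{u\in\Delta(S)}\widehat{\mathbb{P}}(u\succ\hat\pi_S)-\tfrac12\le(\beta\log N+c)/T$. With $\beta=2$ and $T\ge(2\log N+1/2)/(\varepsilon\mathcal{C}_{\uni})$ this is at most $\varepsilon\mathcal{C}_{\uni}$. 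We will choose the constant $c$ so that it matches the $1/2$ in $T$. Getting this exact constant is bookkeeping, but it is also the first place the argument can break.

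\textbf{Step 2 (statistical transfer).} Here we reuse \Cref{lem:bon_pi_R_bound}. We expect its proof uses only that $\hat\pi_S$ is an \emph{approximate} empirical equilibrium, i.e.\ that $\widehat{\mathbb{P}}(u\succ\hat\pi_S)\le\tfrac12+\eta$ for all $u\in\Delta(S)$. Under that assumption it should give $\mathbb{P}^*(\pi_{R,S}\succ\hat\pi_S)\le\tfrac12+\eta+\varepsilon\mathcal{C}_{\uni}$. Its mechanism is that $\pi_{R,S}$ is supported on $S$ plus fresh draws from $\pi_{\reftext}$, and the gap between $\widehat{\mathbb{P}}$ and $\mathbb{P}^*$ is controlled by Cauchy--Schwarz via $\varepsilon\sqrt{\mathcal{C}}\le\varepsilon\mathcal{C}$. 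If the lemma is stated only for the exact LP solution, we will re-derive it with the slack $\eta$ added. This is the main obstacle. The approximate-equilibrium inequality is needed against the rejection-sampled comparator, which is itself a mixture over $S$, so the slack should pass through linearly. Then \Cref{lem:total_variance}, with the same choices $M=(N-1)/\log(4/\varepsilon^2)$ and $N\ge4\log(2/\varepsilon)\mathcal{C}_{\uni}$, bounds the mismatch term by $2D_{\mathrm{TV}}\le\tfrac12\varepsilon^2\mathcal{C}_{\uni}$.

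\textbf{Step 3 (assembly).} Combining the steps with $\eta=\varepsilon\mathcal{C}_{\uni}$ gives
\[
\DualGap(\hat\pi_S)\le 2\bigl(2\varepsilon\mathcal{C}_{\uni}+\tfrac12\varepsilon^2\mathcal{C}_{\uni}\bigr)\le5\varepsilon\mathcal{C}_{\uni},
\]
using $\varepsilon\le1$. Finally, exactly as in the proof of \Cref{thm:upper_bound_bon}, we take the expectation over $S$ and use $\max_y\mathbb{E}_S\le\mathbb{E}_S\max_y$, which yields the stated bound on $\DualGap(\hat\pi)$.
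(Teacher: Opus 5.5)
Your proposal is correct and follows the paper's proof essentially step for step. The ingredients are the same: the optimistic mirror descent bound of \Cref{lem:omd_hat_oracle} (your remainder is $c=\tfrac{1}{\beta}\|\hat r_1\|_\infty^2\le\tfrac12$ with $\hat r_0:=0$), the Cauchy--Schwarz transfer $|\mathbb{P}^*(\pi\succ\hat\pi)-\widehat{\mathbb{P}}(\pi\succ\hat\pi)|\le\varepsilon(x)\sqrt{\mathcal{C}^{\pi}(x)\,\mathcal{C}^{\hat\pi}(x)}\le\varepsilon(x)\mathcal{C}_{\mathrm{uni}}(x)$ applied to $\pi=\pi_R\mid\mathcal{Y}_N$ (this needs both density ratios, not a single $\sqrt{\mathcal{C}}$), and \Cref{lem:total_variance}, assembled with the same constants. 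Two small fixes: $\pi_R$ must be the rejection sampler run on $y_1,\dots,y_{N-1}$ with fallback $y_N$ (no fresh draws), since that is what places it in $\Delta(\mathcal{Y}_N)$ where the approximate-equilibrium inequality applies; and the bound $D_{\text{TV}}\le\tfrac14\varepsilon^2(x)\mathcal{C}_{\mathrm{uni}}(x)$ holds only after averaging over $S$, so the per-$S$ inequality should read $\DualGap(\hat\pi_S)\le4\varepsilon(x)\mathcal{C}_{\mathrm{uni}}(x)+4D_{\text{TV}}(\tilde\pi_S,\pi_{R,S}\mid S)$ before you take expectations.
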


We provide a proof sketch below, and defer the whole proof to \Cref{appendix:nmd}.
\begin{proof}[Proof Sketch]
The structure parallels the proof of Theorem~\ref{thm:upper_bound_bon}. 
By the skew-symmetry of the zero-sum game, it suffices to upper-bound 
$\mathbb{P}^*(\tilde\pi \succ \hat\pi) - 1/2$, where 
$\tilde\pi := \delta_{y^{*}}$ with
$y^{*} \in \arg\max_y \mathbb{P}^*(y \succ \hat{\pi} \mid x)$.
Introducing the rejection-sampling approximation $\pi_R$, we decompose
\begin{equation*}
\mathbb{P}^*(\tilde\pi \succ \hat\pi) 
\leq \mathbb{P}^*(\pi_R \succ \hat\pi) 
   + \bigl|\mathbb{P}^*(\tilde\pi \succ \hat\pi) - \mathbb{P}^*(\pi_R \succ \hat\pi)\bigr|.
\end{equation*}
The second term is controlled by the property of approximate rejection sampling, similarly to that in Theorem~\ref{thm:upper_bound_bon}.
For the first term, we further decompose
\begin{equation*}
\mathbb{P}^*(\pi_R \succ \hat\pi) 
\leq \widehat{\mathbb{P}}(\pi_R \succ \hat\pi) 
   + \bigl|\widehat{\mathbb{P}}(\pi_R \succ \hat\pi) - \mathbb{P}^*(\pi_R \succ \hat\pi)\bigr|,
\end{equation*}
bounding the empirical term via the cumulative-regret guarantee of Nash Mirror Descent on the game $\widehat{\mathbb{P}}$, and the transfer term by $\mathcal{C}_{\text{uni}}(x) \varepsilon(x)$. 
\end{proof}

\begin{remark}
    Our theory prescribes a specific $\beta = 2$.
    In practice, our experiments (\Cref{sec:exp}) show that the performance of NMD is empirically robust to the choice of $\beta$. 
\end{remark}
 
\begin{remark}[Comparison of BoN and NMD]
\label{rem:time}
    \Cref{thm:upper_bound_bon} and \Cref{thm:upper_bound_omd} give the same duality gap bound $O(\varepsilon(x) \mathcal{C}_{\uni}(x))$, and both algorithms share the same query complexity: $O(N)$ samples from $\pi_{\reftext}$ and $O(N^2)$ preference queries to $\widehat{\mathbb{P}}$.
    The two algorithms differ in post-query complexity: BoN solves an LP in $O(N^{3.5}\log(1/\epsilon))$ time, while NMD runs $T = O(\frac{\log(N)}{\varepsilon(x)\mathcal{C}_{\uni}(x)})$ updates of $O(N^2)$ each, for a total of $O(\frac{N^2 \log N}{\varepsilon(x)\mathcal{C}_{\uni}(x)})$.
\end{remark}

\section{Lower Bound}
\label{sec:lower}

The upper bounds in Theorems~\ref{thm:upper_bound_bon} and~\ref{thm:upper_bound_omd} show that 
both BoN and NMD achieve duality gap $O(\varepsilon(x) \mathcal{C}_{\text{uni}}(x))$ with an appropriate choice of parameters. 
A natural question is whether this rate is optimal.
In this section, we answer this affirmatively by constructing problem instances on which any inference-time algorithm must incur duality gap $\Omega(\varepsilon(x) \mathcal{C}_{\text{uni}}(x))$.

\begin{restatable}{theorem}{thmlower}
   \label{thm:lower_bound}
    Given a prompt $x$ and $K$ responses $\{ y_1,\ldots, y_K\}$, let $\pi_{\reftext}(y_i|x) = 1/K$ for all $i \in [K]$. 
    For any alignment algorithm $\mathcal{A}$ and any $\varepsilon_0 \in (0, \sqrt{2}/(3K)]$, there exist preference oracles $\mathbb{P}^*$ and $\widehat{ \mathbb{P}}$ with $\varepsilon(x) = \varepsilon_0$ such that:
    \begin{equation*}
        \DualGap(\mathcal{A}(\widehat{\mathbb{P}})) \geq \frac{\varepsilon_0 \cdot \mathcal{C}_{\mathrm{uni}}(x)}{2\sqrt{2}}.
    \end{equation*} 
\end{restatable}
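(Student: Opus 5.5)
The plan is a two-point (Le Cam style) argument. I will build two true preference oracles $\mathbb{P}^*_A$ and $\mathbb{P}^*_B$ that share the same proxy $\widehat{\mathbb{P}}$. Under the uniform reference, the samples the algorithm draws have the same law in both instances, so $\mathcal{A}$ cannot tell them apart.

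First, the coverage constant. Since $\pi_{\reftext}$ is uniform on $K$ responses, $\mathcal{C}^{\pi}(x)=K\sum_i \pi(y_i)^2$. This is maximized by a point mass, so $\mathcal{C}_{\uni}(x)=K$, and the target becomes $\DualGap\ge \varepsilon_0 K/(2\sqrt2)$.

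\textbf{Construction.} Take $\widehat{\mathbb{P}}(y_1\succ y_j)=\widehat{\mathbb{P}}(y_2\succ y_j)=1$ for all $j\ge 3$, $\widehat{\mathbb{P}}(y_1\succ y_2)=1/2$, all remaining pairs among $y_3,\dots,y_K$ equal to $1/2$, and complete by skew-symmetry. Let $\mathbb{P}^*_A$ agree with $\widehat{\mathbb{P}}$ except $\mathbb{P}^*_A(y_1\succ y_2)=1/2+\delta$, and let $\mathbb{P}^*_B$ agree except $\mathbb{P}^*_B(y_2\succ y_1)=1/2+\delta$, each skew-symmetrically completed. Only the ordered pairs $(y_1,y_2)$ and $(y_2,y_1)$ carry error, each with squared error $\delta^2$. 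Hence $\varepsilon^2(x)=2\delta^2/K^2$ in both instances, and choosing $\delta=\varepsilon_0K/\sqrt2$ gives $\varepsilon(x)=\varepsilon_0$ exactly. The hypothesis $\varepsilon_0\le\sqrt2/(3K)$ ensures $\delta\le 1/3$, so every entry lies in $[0,1]$.

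\textbf{Gap computation.} Let $\pi$ be the output of $\mathcal{A}$; if $\mathcal{A}$ is randomized, take $\pi$ to be its expected output. The bounds below are linear in $\pi$ (and $\DualGap$ is convex), so this is harmless. By skew-symmetry, $\DualGap(\pi)=2\max_y\mathbb{P}^*(y\succ\pi)-1$.
\begin{itemize}[leftmargin=2em]
\item Under $A$, testing with $y=y_1$ gives $\mathbb{P}^*_A(y_1\succ\pi)=\tfrac12\pi(y_1)+(\tfrac12+\delta)\pi(y_2)+\sum_{j\ge3}\pi(y_j)$. Therefore $\DualGap_A(\pi)\ge 2\delta\,\pi(y_2)+(1-\pi(y_1)-\pi(y_2))$.
\item Symmetrically, $\DualGap_B(\pi)\ge 2\delta\,\pi(y_1)+(1-\pi(y_1)-\pi(y_2))$.
\end{itemize}
Adding the two and using $\delta\le1$, the sum is at least $2\delta(\pi(y_1)+\pi(y_2))+2(1-\pi(y_1)-\pi(y_2))\ge 2\delta$. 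So on one of the two instances $\DualGap\ge\delta=\varepsilon_0K/\sqrt2$. This equals $\varepsilon_0\mathcal{C}_{\uni}(x)/\sqrt2$, which is at least the claimed $\varepsilon_0\mathcal{C}_{\uni}(x)/(2\sqrt2)$.

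\textbf{Main obstacle.} The delicate step is designing $\widehat{\mathbb{P}}$ so that the error is concentrated on $O(1)$ pairs, which makes $\delta\sim\varepsilon_0 K$ rather than $\varepsilon_0\sqrt K$. At the same time, $\widehat{\mathbb{P}}$ must force any low-gap policy to put its mass on $\{y_1,y_2\}$. Making $y_1$ and $y_2$ strictly dominate all other responses in both instances handles the latter. I would double-check that this domination does not create additional error terms in $\varepsilon^2(x)$; it does not, because $\mathbb{P}^*$ and $\widehat{\mathbb{P}}$ coincide on those pairs.
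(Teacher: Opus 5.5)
Your proof is correct and follows essentially the paper's argument. Both use a two-point construction with a shared proxy $\widehat{\mathbb{P}}$ and error confined to the $(y_1,y_2)$ entry, so that $\varepsilon^2(x)$ scales as $\delta^2/K^2$ and hence $\delta \propto \varepsilon_0 K$, with gap lower bounds obtained by testing against $y_1$ in one world and $y_2$ in the other. The difference lies only in the instance, and it buys you a factor $2$: you get $\varepsilon_0\mathcal{C}_{\uni}(x)/\sqrt2$, stronger than the stated bound. This is because you center $\widehat{\mathbb{P}}(y_1\succ y_2)$ at a tie with a symmetric $\pm\delta$ perturbation, and let $y_1,y_2$ beat the remaining responses with probability $1$ at zero error cost, whereas the paper centers that entry at $\tfrac12+\delta_0$ and moves it by $\pm 2\delta_0$.
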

\begin{proof}[Proof Sketch]
We set the construction: $\widehat{ \mathbb{P}}(y_1 \succ y_j) = \tfrac{1}{2} + \delta_0, \forall j \geq 2$, and $\widehat {\mathbb{P}}(y_j \succ y_k) = \tfrac{1}{2}, \forall j,k \geq 2$ for a small value $\delta_0 = \varepsilon_0 \mathcal{C}_{\text{uni}}(x) / (2\sqrt{2})$.
This makes $y_1$ the dominant strategy under $\widehat{\mathbb{P}}$.
We assume that $p := \hat{\pi}(y_1)$.
Then we construct two real worlds $A$ and $B$.
In world $A$, $\mathbb{P}^*_A$ amplifies $y_1$'s margin over $y_2$, 
keeping $y_1$ dominant. In world $B$, $\mathbb{P}^*_B$ flips the 
$(y_1, y_2)$ entry so that $y_2$ narrowly beats $y_1$, making $y_2$ the 
new dominant strategy.
The duality gaps in the two worlds scale as 
$2(1-p)\delta_0$ and $2p\delta_0$ respectively, and the adversary picks 
the larger.
The algorithm minimizes the maximum at $p = 1/2$, leaving a 
duality gap of at least $\delta_0 = \varepsilon_0 \mathcal{C}_{\text{uni}}(x) / (2\sqrt{2})$.
\end{proof}
The full proof is relegated to \Cref{appendix:lower}.
\begin{remark}[Optimality of BoN and NMD]
    Combining \Cref{thm:lower_bound} with \Cref{thm:upper_bound_bon} and \Cref{thm:upper_bound_omd} yields a tight characterization: the optimal duality gap for inference-time alignment with general preferences is $\Theta(\varepsilon(x)\mathcal{C}_{\text{uni}}(x))$, achieved by both of our algorithms.
\end{remark}

\begin{remark}[Contrast with reward-based alignment.]
The duality gap $\Omega(\varepsilon(x) \mathcal{C}_{\text{uni}}(x))$ stands in sharp contrast to the optimal rate for the reward-based inference time alignment, which \citep{huang2025best} show is $\Theta(\varepsilon_{\text{RM}}(x) \sqrt{\mathcal{C}^{\pi^*}(x)})$.
    Here $\varepsilon_{\text{RM}}(x)$ is the reward model error, and $\pi^*$ is the comparator policy.
    Two structural differences are notable. 
    First, the coverage dependence is linear in $\mathcal{C}_{\text{uni}}(x)$ in our setting versus $\sqrt{\mathcal{C}^{\pi^*}(x)}$ in the reward setting. This implies that bad coverage of the base policy hurts more when preferences are general than when they are scalar. 
    Second, our relevant coverage quantity is the universal coverage $\mathcal{C}_{\text{uni}}(x)$, not single-policy coverage $\mathcal{C}^{\pi^*}$, reflecting that inference-time alignment with general preferences is fundamentally harder than its reward-based counterpart. 
\end{remark}

\section{Experiments}
\label{sec:exp}
We evaluate our proposed alignment algorithms,  Best-of-Nash and Nash Mirror Descent, on three datasets and study how their performance varies with sample size $N$, regularization parameter $\beta$, and the choice of base models and preference models.

\subsection{Setup}

\paragraph{Datasets and Models.}
We conduct experiments on three preference datasets that are commonly used for training and evaluating LLM alignment: \href{https://huggingface.co/datasets/CarperAI/openai_summarize_tldr}{TLDR} 
(text summarization), \href{https://huggingface.co/datasets/nvidia/HelpSteer2}{HelpSteer2} 
(general-purpose helpfulness)~\citep{wang2024helpsteer2}, and \href{https://huggingface.co/datasets/HuggingFaceH4/ultrafeedback_binarized}{UltraFeedback}~\citep{cui2023ultrafeedback} 
(instruction following). 
We sample $100$ prompts for evaluation in each dataset. 
In our experiments, we consider three supervised fine-tuned (SFT) models as base models: \href{https://huggingface.co/RLHFlow/LLaMA3-SFT}{LLaMA3-SFT (8B)}~\citep{dong2024rlhf}, \href{https://huggingface.co/mistralai/Mistral-7B-Instruct-v0.3}{Mistral-Instruct (7B)}, and \href{https://huggingface.co/google/gemma-2b-it}{Gemma-SFT (2B)}.
Two aligned models are selected as our estimate preference oracle,  $\hat{\mathbb{P}}$: \href{https://huggingface.co/RLHFlow/pair-preference-model-LLaMA3-8B}{LLaMA3-PM (8B)}~\citep{dong2024rlhf} and \href{https://huggingface.co/llm-blender/PairRM-hf}{PairRM (0.4B)}~\citep{jiang2023llm}.

\paragraph{Evaluation.}
Our headline metric is \emph{expected win rate} (EWR := win + draw/2).
Results in the main paper mainly use LLaMA3-SFT as the base model until otherwise stated.
We also compare results to 
\href{https://huggingface.co/RLHFlow/LLaMA3-iterative-DPO-final}{LLaMA3-DPO (8B)}, a fine-tuned version of LLaMA3-SFT, to illustrate the performance of our algorithm against a model fine-tuned on preference data.
We use an LLM as a judge~\citep{zheng2023judging} since alignment is inherently preference-based, and LLMs provide a scalable and consistent proxy for human evaluations of model outputs. Concretely, we compare each generated response against a reference answer from the dataset, with judgments produced by DeepSeek-V4-Flash.
To \textit{control positional} bias~\citep{zheng2023judging,wang2024large}, we query the judge twice per pair with orderings swapped, counting a win/loss only when both orderings agree and a draw otherwise. 

\subsection{Main Results.}

\begin{table*}[t]
\centering
\small
\setlength{\tabcolsep}{6pt}
\renewcommand{\arraystretch}{1.1}
\begin{tabular}{lccc}
\toprule
Dataset & Base SFT  & BoN & NMD \\
\midrule
TLDR      & 62.9\%  & 73.5\% & 73.1\% \\
HelpSteer2  & 44.1\%  & 68.8\% & 67.8\% \\
UltraFeedback & 23.9\%  & 48.8\% & 46.7\%\\
\bottomrule
\end{tabular}
\caption{Comparison of expected win-rate (EWR) across three datasets.
BoN and NMD sample $N = 64$ responses from LLaMA3-SFT and we use LLaMA3-PM as preference model; $\beta = 1$ for NMD. 
}
\label{tab:dataset}
\end{table*}
\footnotetext[1]{LLaMA3-DPO model was fine-tuned on HelpSteer2 and UltraFeedback, making direct comparison unfair on these datasets.}

\Cref{tab:dataset} presents our headline comparison: BoN and NMD against the SFT base policy across all three datasets.
Both BoN and NMD outperform the base SFT across all three datasets: both algorithms have an improvement of roughly $10\%$ in TLDR, and $\approx24\%$ in HelpSteer2 and  UltraFeedback. 
BoN and NMD have similar expected win-rates on all three datasets, consistent with the theoretical guarantees in \Cref{thm:upper_bound_bon} and \Cref{thm:upper_bound_omd}.

\paragraph{Comparison to DPO}
\Cref{fig:tldr_llama} shows that BoN and NMD match the expected win-rate of LLaMA3-DPO with a reasonable number of samples, without any parameter updates.
This suggests that careful inference-time alignment can substitute for fine-tuning when the preference oracle is sufficiently strong.

\paragraph{Sample size $N$ and regularization $\beta$.} 
\Cref{fig:tldr_bon_llama} shows BoN's expected win-rate on TLDR as $N$ increases. 
EWR rises monotonically from $62.9\%$ at $N = 1$ to $73.5\%$ at $N = 64$, approaching the DPO baseline. 
This confirms the predicted scaling: with more samples, BoN better approximates the Nash equilibrium.
\Cref{fig:tldr_omd_llama} shows that NMD's EWR is flat across $\beta$, thus removing the need for hyper-parameter tuning.

\begin{figure}[t]
    \centering
    \begin{subfigure}{0.48\textwidth}
        \centering
        \includegraphics[width=\textwidth]{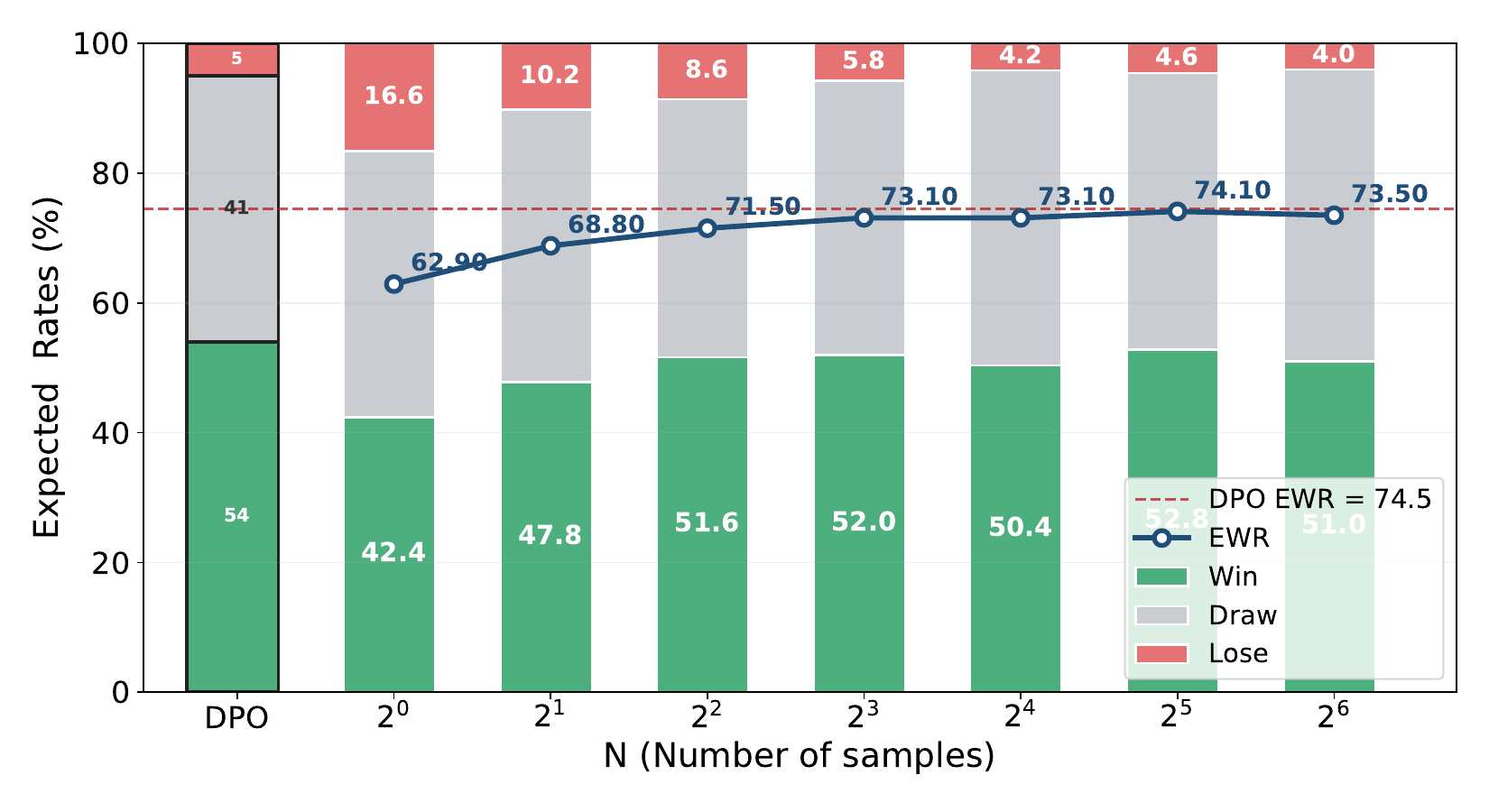}
        \caption{ BoN with varying sample size $N$.}
        \label{fig:tldr_bon_llama}
    \end{subfigure}
    \hfill 
    \begin{subfigure}{0.48\textwidth}
        \centering
        \includegraphics[width=\textwidth]{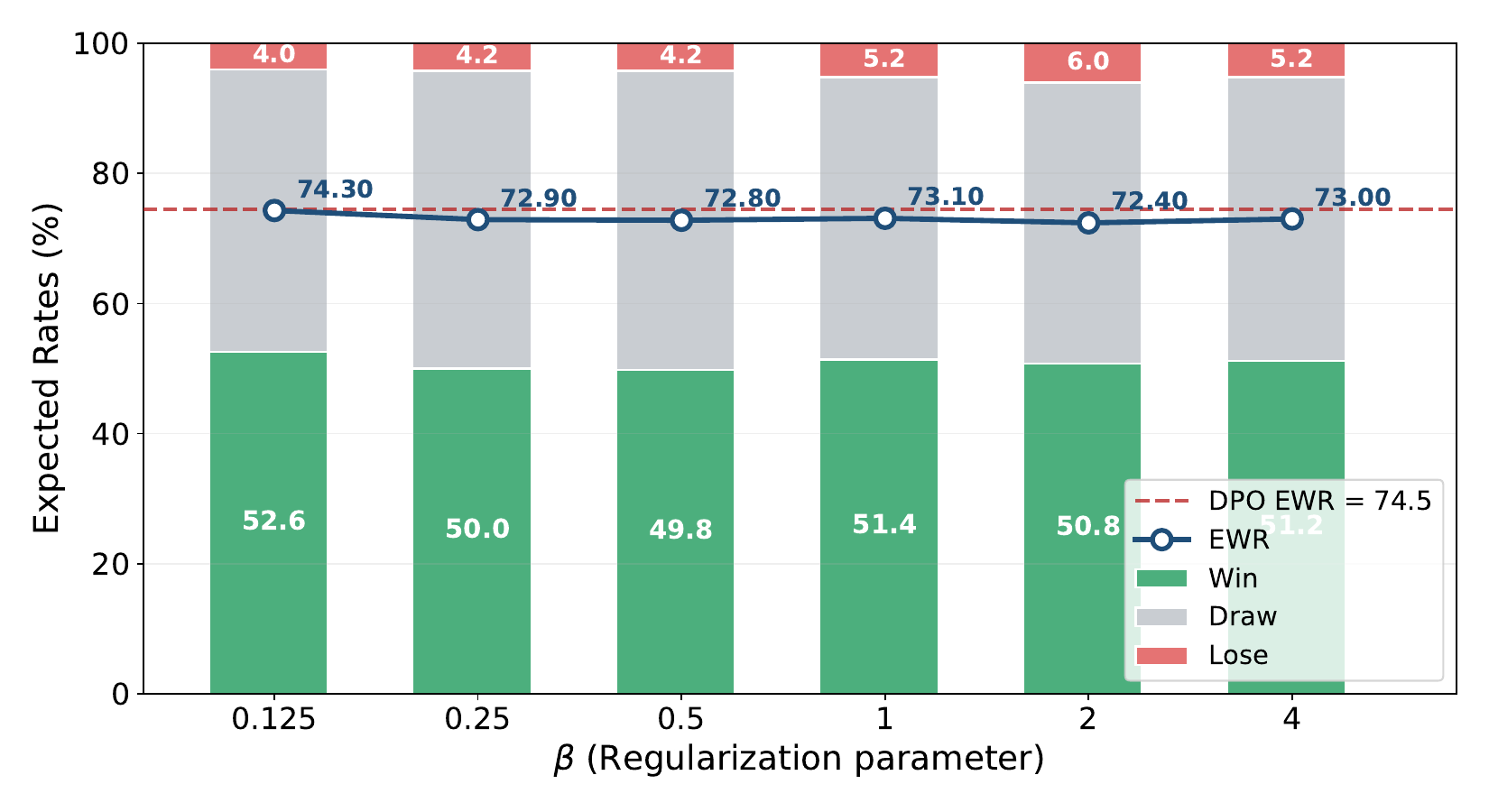}
        \caption{NMD with varying $\beta$ and fixed $N = 64$.}
        \label{fig:tldr_omd_llama}
        
    \end{subfigure}
    
    \caption{Win/Draw/Lose distribution and expected win-rate (Win + Draw/2) on the TLDR dataset. The dashed line indicates the baseline score for LLaMA3-DPO. We use LLaMA3-SFT as the reference policy and LLaMA3-PM as the preference oracle $\widehat{\mathbb{P}}$.}
    \label{fig:tldr_llama}
\end{figure}

\begin{figure}[t]
    \centering
    \begin{subfigure}{0.48\textwidth}
        \centering
        \includegraphics[width=\textwidth]{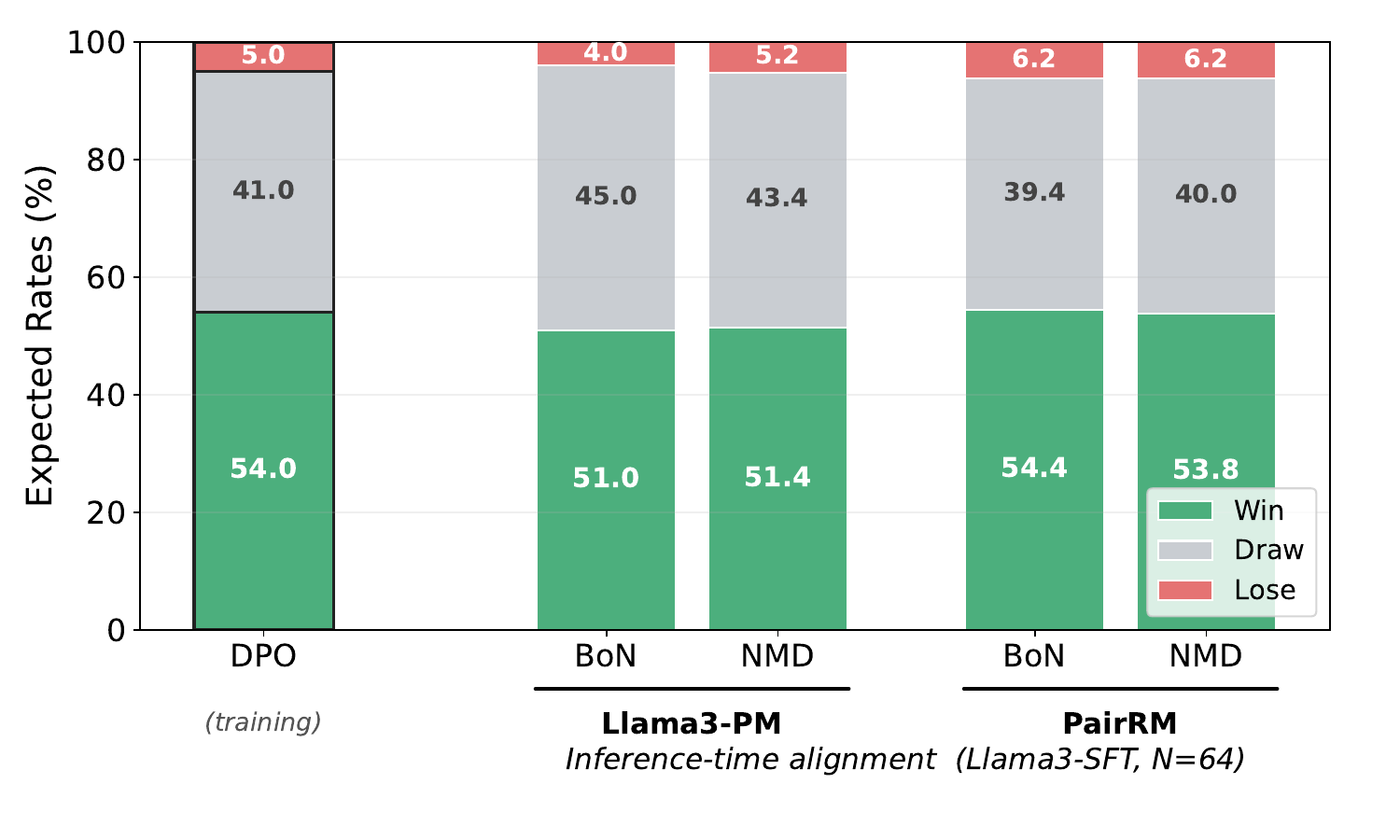}
    \end{subfigure}
    \hfill 
    \begin{subfigure}{0.48\textwidth}
        \centering
        \includegraphics[width=\textwidth]{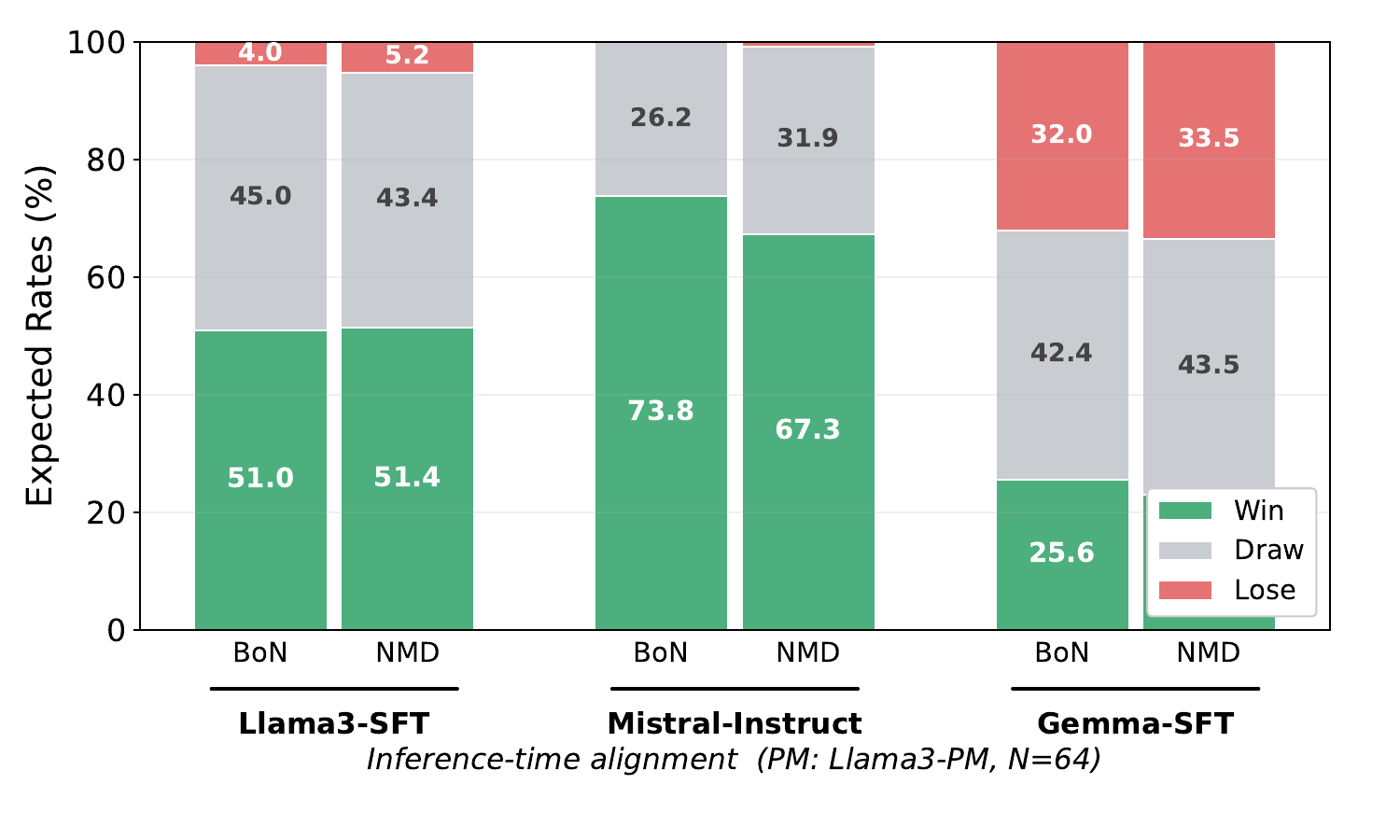}
        
    \end{subfigure}

    \caption{Win/Draw/Lose distribution on the TLDR dataset. 
    \textbf{Left}: we compare the Llama3-PM and PairRM for preference model, with the base model = LLama3-SFT.
    \textbf{Right}: we compare LLama3-SFT, Mistral-instruct, and Gemma-SFT for base model, with the preference model = LLama3-PM.}
    \label{fig:different_models}
    
\end{figure}

\paragraph{Preference oracle and base model.}
We show the effect of the preference model $\widehat{\mathbb{P}}$ and base model $\pi_{\reftext}$ in \Cref{fig:different_models}.
The left panel compares LLaMA3-PM (8B) and PairRM (0.4B) as the preference oracle with LLaMA3-SFT fixed as the base model: despite the 20 times difference in size, the two yield rather similar EWRs on TLDR for both BoN and NMD.
The right panel compares the three base models with LLaMA3-PM fixed as the preference oracle: Mistral-Instruct attains the highest EWR, followed by LLaMA3-SFT, with Gemma-SFT trailing by a clear margin.

\begin{figure}
    \centering
    \includegraphics[width=0.48\textwidth, height=1.6in]{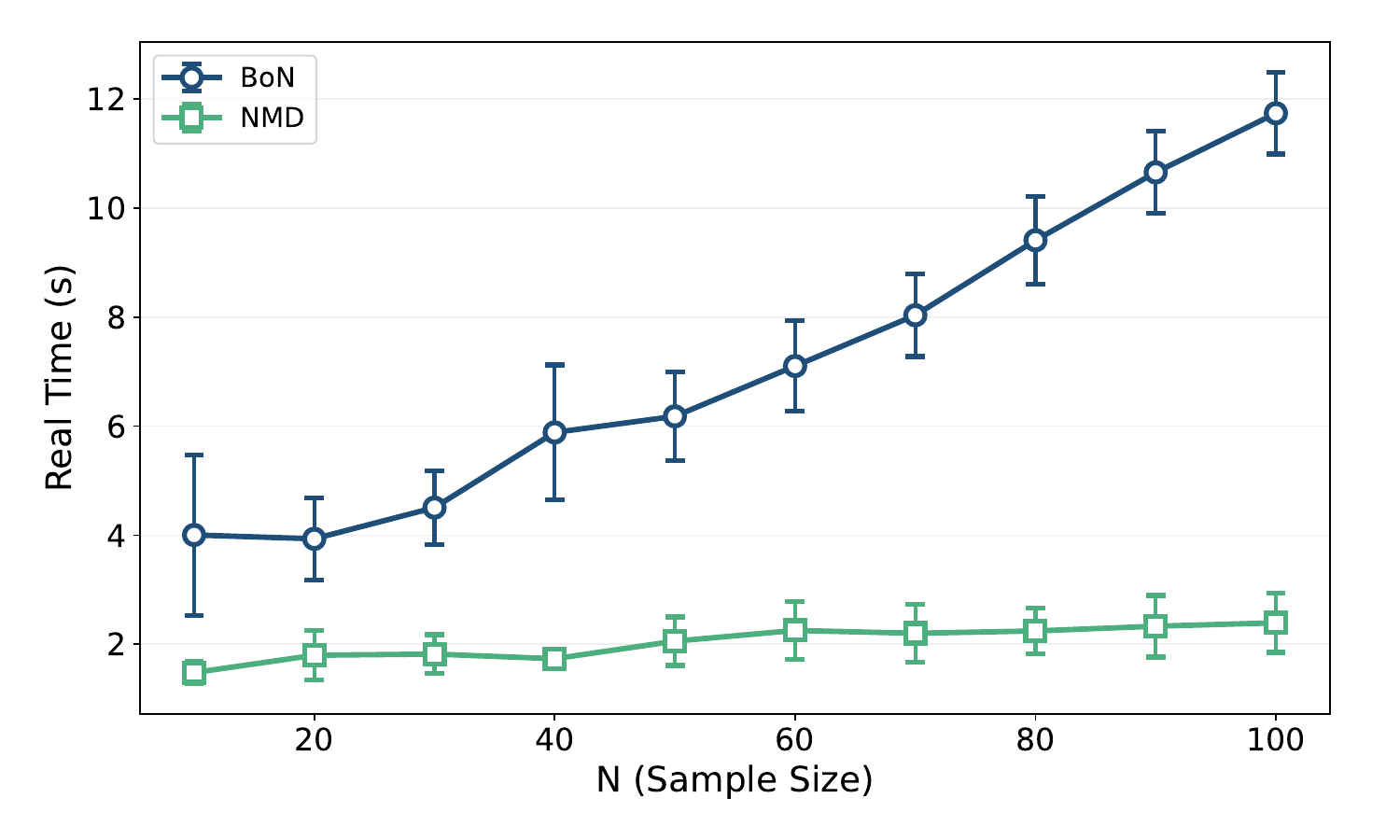}
    \caption{Post-query time comparison of BoN and NMD for varying size $N$. 
    }
    \label{fig:real_time}
\end{figure}

\paragraph{Post-query alignment run-time.}
As we show in \Cref{rem:time}, BoN and NMD share the same query complexity, but differ in post-query computation.
\Cref{fig:real_time} compares the post-query time of the two algorithms as $N$ grows: BoN scales rapidly with $N$, while NMD remains nearly flat.
This is consistent with our time complexity analysis.

Omitted experiment details and additional experiments are deferred to \Cref{sec:add_exp}, including comparing our methods against three baseline methods.

\section{Conclusion and Limitations}
We initiate the study of inference-time alignment under general preferences, formulating the problem as computing a Nash equilibrium of a two-player zero-sum game between policies under an imperfect preference oracle.
We propose two algorithms, Best-of-Nash (BoN) and Nash Mirror Descent (NMD), and prove that both algorithms achieve a duality gap of $O(\varepsilon(x) C_\text{uni}(x))$, which we show to be tight via a matching lower bound. 
The two algorithms differ in post-query computation: BoN solves a linear programming in $O(N^{3.5} \log(1/\epsilon))$ time, while NMD requires $O(\frac{N^2}{\varepsilon(x) C_\text{uni}(x)} \log(N))$ time.

Our work has several limitations.
First, our theoretical guarantees depend on the quality of preference models via $\varepsilon(x)$.
When $\widehat{\mathbb{P}}$ is a poor proxy for the true preference $\mathbb{P}^*$, for example, on prompts that fall outside the distribution on which $\widehat{\mathbb{P}}$ is trained, the methods may inherit the biases of $\widehat{\mathbb{P}}$.
Second, inference-time alignment can only re-weight responses that are reachable under $\pi_{\reftext}$. 
When the base policy assigns negligible probability to high-quality responses, neither BoN nor NMD can recover them, and fine-tuning remains necessary.
Lastly, our analysis treats each prompt independently and provides per-prompt guarantees, thereby neglecting shared structure across prompts such as similar tasks or styles.

We end this section with a few future directions.
Both BoN and NMD require $O(N^2)$ pairwise queries to $\widehat{\mathbb{P}}$, which can be a huge query cost; reducing this cost via active selection of pairs is an interesting direction.
Another direction is to combine our inference-time theory with fine-tuning with general preferences.
Finally, exploiting structure across prompts could improve sample-efficiency beyond the per-prompt rates we establish here.

\section*{Acknowledgments}
We thank the anonymous reviewers for their comments and constructive feedback.
 HH acknowledges support from the National Science Foundation, NSF Awards IIS-2144413 and IIS-2107173.


\bibliographystyle{plainnat}
\bibliography{ref}

\begin{thebibliography}{51}
\providecommand{\natexlab}[1]{#1}
\providecommand{\url}[1]{\texttt{#1}}
\expandafter\ifx\csname urlstyle\endcsname\relax
  \providecommand{\doi}[1]{doi: #1}\else
  \providecommand{\doi}{doi: \begingroup \urlstyle{rm}\Url}\fi

\bibitem[Adler(2013)]{adler2013equivalence}
Ilan Adler.
\newblock The equivalence of linear programs and zero-sum games.
\newblock \emph{International Journal of Game Theory}, 42\penalty0 (1):\penalty0 165--177, 2013.

\bibitem[{Anthropic}(2026)]{anthropic2026claude}
{Anthropic}.
\newblock {Introducing Claude Opus 4.7}.
\newblock \url{https://www.anthropic.com/news/claude-opus-4-7}, 2026.

\bibitem[Azar et~al.(2024)Azar, Guo, Piot, Munos, Rowland, Valko, and Calandriello]{azar2024general}
Mohammad~Gheshlaghi Azar, Zhaohan~Daniel Guo, Bilal Piot, Remi Munos, Mark Rowland, Michal Valko, and Daniele Calandriello.
\newblock A general theoretical paradigm to understand learning from human preferences.
\newblock In \emph{International Conference on Artificial Intelligence and Statistics}, pages 4447--4455. PMLR, 2024.

\bibitem[Block and Polyanskiy(2023)]{block2023sample}
Adam Block and Yury Polyanskiy.
\newblock The sample complexity of approximate rejection sampling with applications to smoothed online learning.
\newblock In \emph{The Thirty Sixth Annual Conference on Learning Theory}, pages 228--273. PMLR, 2023.

\bibitem[Bradley and Terry(1952)]{bradley1952rank}
Ralph~Allan Bradley and Milton~E Terry.
\newblock Rank analysis of incomplete block designs: I. the method of paired comparisons.
\newblock \emph{Biometrika}, 39\penalty0 (3/4):\penalty0 324--345, 1952.

\bibitem[Calandriello et~al.(2024)Calandriello, Guo, Munos, Rowland, Tang, Pires, Richemond, Lan, Valko, Liu, et~al.]{calandriello2024human}
Daniele Calandriello, Daniel Guo, Remi Munos, Mark Rowland, Yunhao Tang, Bernardo~Avila Pires, Pierre~Harvey Richemond, Charline~Le Lan, Michal Valko, Tianqi Liu, et~al.
\newblock Human alignment of large language models through online preference optimisation.
\newblock \emph{arXiv preprint arXiv:2403.08635}, 2024.

\bibitem[Chen et~al.(2024)Chen, Zhang, Luo, Chai, and Liu]{chen2024pad}
Ruizhe Chen, Xiaotian Zhang, Meng Luo, Wenhao Chai, and Zuozhu Liu.
\newblock Pad: Personalized alignment of llms at decoding-time.
\newblock \emph{arXiv preprint arXiv:2410.04070}, 2024.

\bibitem[Chow et~al.(2024)Chow, Tennenholtz, Gur, Zhuang, Dai, Thiagarajan, Boutilier, Agarwal, Kumar, and Faust]{chow2024inference}
Yinlam Chow, Guy Tennenholtz, Izzeddin Gur, Vincent Zhuang, Bo~Dai, Sridhar Thiagarajan, Craig Boutilier, Rishabh Agarwal, Aviral Kumar, and Aleksandra Faust.
\newblock Inference-aware fine-tuning for best-of-n sampling in large language models.
\newblock \emph{arXiv preprint arXiv:2412.15287}, 2024.

\bibitem[Christiano et~al.(2017)Christiano, Leike, Brown, Martic, Legg, and Amodei]{christiano2017deep}
Paul~F Christiano, Jan Leike, Tom Brown, Miljan Martic, Shane Legg, and Dario Amodei.
\newblock Deep reinforcement learning from human preferences.
\newblock \emph{Advances in neural information processing systems}, 30, 2017.

\bibitem[Cui et~al.(2023)Cui, Yuan, Ding, Yao, Zhu, Ni, Xie, Liu, and Sun]{cui2023ultrafeedback}
Ganqu Cui, Lifan Yuan, Ning Ding, Guanming Yao, Wei Zhu, Yuan Ni, Guotong Xie, Zhiyuan Liu, and Maosong Sun.
\newblock Ultrafeedback: Boosting language models with high-quality feedback, 2023.

\bibitem[Cui and Du(2022)]{cui2022offline}
Qiwen Cui and Simon~S Du.
\newblock When are offline two-player zero-sum markov games solvable?
\newblock \emph{Advances in Neural Information Processing Systems}, 35:\penalty0 25779--25791, 2022.

\bibitem[Dong et~al.(2024)Dong, Xiong, Pang, Wang, Zhao, Zhou, Jiang, Sahoo, Xiong, and Zhang]{dong2024rlhf}
Hanze Dong, Wei Xiong, Bo~Pang, Haoxiang Wang, Han Zhao, Yingbo Zhou, Nan Jiang, Doyen Sahoo, Caiming Xiong, and Tong Zhang.
\newblock Rlhf workflow: From reward modeling to online rlhf.
\newblock \emph{arXiv preprint arXiv:2405.07863}, 2024.

\bibitem[Freund and Schapire(1999)]{freund1999adaptive}
Yoav Freund and Robert~E Schapire.
\newblock Adaptive game playing using multiplicative weights.
\newblock \emph{Games and Economic Behavior}, 29\penalty0 (1-2):\penalty0 79--103, 1999.

\bibitem[Gao et~al.(2023)Gao, Schulman, and Hilton]{gao2023scaling}
Leo Gao, John Schulman, and Jacob Hilton.
\newblock Scaling laws for reward model overoptimization.
\newblock In \emph{International Conference on Machine Learning}, pages 10835--10866. PMLR, 2023.

\bibitem[G{\"o}lz et~al.(2026)G{\"o}lz, Haghtalab, and Yang]{golz2026distortion}
Paul G{\"o}lz, Nika Haghtalab, and Kunhe Yang.
\newblock Distortion of ai alignment: Does preference optimization optimize for preferences?
\newblock \emph{Advances in Neural Information Processing Systems}, 38:\penalty0 25969--26007, 2026.

\bibitem[{Google}(2026)]{google2026gemini3}
{Google}.
\newblock {Gemini 3.1 Pro Model Card}.
\newblock \url{https://storage.googleapis.com/deepmind-media/Model-Cards/Gemini-3-1-Pro-Model-Card.pdf}, 2026.

\bibitem[Gui et~al.(2024)Gui, G{\^a}rbacea, and Veitch]{gui2024bonbon}
Lin Gui, Cristina G{\^a}rbacea, and Victor Veitch.
\newblock Bonbon alignment for large language models and the sweetness of best-of-n sampling.
\newblock \emph{Advances in Neural Information Processing Systems}, 37:\penalty0 2851--2885, 2024.

\bibitem[Halpern et~al.(2026)Halpern, Micha, Procaccia, and Shapira]{halpern2026pairwise}
Daniel Halpern, Evi Micha, Ariel Procaccia, and Itai Shapira.
\newblock Pairwise calibrated rewards for pluralistic alignment.
\newblock \emph{Advances in Neural Information Processing Systems}, 38:\penalty0 57882--57916, 2026.

\bibitem[Huang et~al.(2025)Huang, Block, Liu, Jiang, Krishnamurthy, and Foster]{huang2025best}
Audrey Huang, Adam Block, Qinghua Liu, Nan Jiang, Akshay Krishnamurthy, and Dylan~J Foster.
\newblock Is best-of-n the best of them? coverage, scaling, and optimality in inference-time alignment.
\newblock \emph{arXiv preprint arXiv:2503.21878}, 2025.

\bibitem[Jiang et~al.(2023)Jiang, Ren, and Lin]{jiang2023llm}
Dongfu Jiang, Xiang Ren, and Bill~Yuchen Lin.
\newblock Llm-blender: Ensembling large language models with pairwise ranking and generative fusion.
\newblock In \emph{Proceedings of the 61st Annual Meeting of the Association for Computational Linguistics (Volume 1: Long Papers)}, pages 14165--14178, 2023.

\bibitem[Khanov et~al.(2024)Khanov, Burapacheep, and Li]{khanov2024args}
Maxim Khanov, Jirayu Burapacheep, and Yixuan Li.
\newblock Args: Alignment as reward-guided search.
\newblock \emph{arXiv preprint arXiv:2402.01694}, 2024.

\bibitem[Maura-Rivero et~al.(2025)Maura-Rivero, Lanctot, Visin, and Larson]{maura2025jackpot}
Roberto-Rafael Maura-Rivero, Marc Lanctot, Francesco Visin, and Kate Larson.
\newblock Jackpot! alignment as a maximal lottery.
\newblock \emph{arXiv preprint arXiv:2501.19266}, 2025.

\bibitem[Munos et~al.(2024)Munos, Valko, Calandriello, Azar, Rowland, Guo, Tang, Geist, Mesnard, Fiegel, et~al.]{munos2024nash}
R{\'e}mi Munos, Michal Valko, Daniele Calandriello, Mohammad~Gheshlaghi Azar, Mark Rowland, Zhaohan~Daniel Guo, Yunhao Tang, Matthieu Geist, Thomas Mesnard, C{\^o}me Fiegel, et~al.
\newblock Nash learning from human feedback.
\newblock In \emph{Forty-first International Conference on Machine Learning}, 2024.

\bibitem[Nakano et~al.(2021)Nakano, Hilton, Balaji, Wu, Ouyang, Kim, Hesse, Jain, Kosaraju, Saunders, et~al.]{nakano2021webgpt}
Reiichiro Nakano, Jacob Hilton, Suchir Balaji, Jeff Wu, Long Ouyang, Christina Kim, Christopher Hesse, Shantanu Jain, Vineet Kosaraju, William Saunders, et~al.
\newblock Webgpt: Browser-assisted question-answering with human feedback.
\newblock \emph{arXiv preprint arXiv:2112.09332}, 2021.

\bibitem[Nemirovski(2004)]{nemirovski2004prox}
Arkadi Nemirovski.
\newblock Prox-method with rate of convergence o (1/t) for variational inequalities with lipschitz continuous monotone operators and smooth convex-concave saddle point problems.
\newblock \emph{SIAM Journal on Optimization}, 15\penalty0 (1):\penalty0 229--251, 2004.

\bibitem[Nesterov(2009)]{nesterov2009primal}
Yurii Nesterov.
\newblock Primal-dual subgradient methods for convex problems.
\newblock \emph{Mathematical programming}, 120\penalty0 (1):\penalty0 221--259, 2009.

\bibitem[{OpenAI}(2026)]{openai2026gpt5}
{OpenAI}.
\newblock {GPT-5.4 Thinking System Card}.
\newblock \url{https://deploymentsafety.openai.com/gpt-5-4-thinking/gpt-5-4-thinking.pdf}, 2026.

\bibitem[QwenTeam(2025)]{qwen3technicalreport}
QwenTeam.
\newblock Qwen3 technical report, 2025.
\newblock URL \url{https://arxiv.org/abs/2505.09388}.

\bibitem[Rafailov et~al.(2023)Rafailov, Sharma, Mitchell, Manning, Ermon, and Finn]{rafailov2023direct}
Rafael Rafailov, Archit Sharma, Eric Mitchell, Christopher~D Manning, Stefano Ermon, and Chelsea Finn.
\newblock Direct preference optimization: Your language model is secretly a reward model.
\newblock \emph{Advances in neural information processing systems}, 36:\penalty0 53728--53741, 2023.

\bibitem[Rakhlin and Sridharan(2013)]{rakhlin2013optimization}
Sasha Rakhlin and Karthik Sridharan.
\newblock Optimization, learning, and games with predictable sequences.
\newblock \emph{Advances in Neural Information Processing Systems}, 26, 2013.

\bibitem[Rosset et~al.(2024)Rosset, Cheng, Mitra, Santacroce, Awadallah, and Xie]{rosset2024direct}
Corby Rosset, Ching-An Cheng, Arindam Mitra, Michael Santacroce, Ahmed Awadallah, and Tengyang Xie.
\newblock Direct nash optimization: Teaching language models to self-improve with general preferences.
\newblock \emph{arXiv preprint arXiv:2404.03715}, 2024.

\bibitem[Schulman et~al.(2017)Schulman, Wolski, Dhariwal, Radford, and Klimov]{schulman2017proximal}
John Schulman, Filip Wolski, Prafulla Dhariwal, Alec Radford, and Oleg Klimov.
\newblock Proximal policy optimization algorithms.
\newblock \emph{arXiv preprint arXiv:1707.06347}, 2017.

\bibitem[Shao et~al.(2024)Shao, Wang, Zhu, Xu, Song, Bi, Zhang, Zhang, Li, Wu, et~al.]{shao2024deepseekmath}
Zhihong Shao, Peiyi Wang, Qihao Zhu, Runxin Xu, Junxiao Song, Xiao Bi, Haowei Zhang, Mingchuan Zhang, YK~Li, Yang Wu, et~al.
\newblock Deepseekmath: Pushing the limits of mathematical reasoning in open language models.
\newblock \emph{arXiv preprint arXiv:2402.03300}, 2024.

\bibitem[Shi et~al.(2024)Shi, Chen, Hu, Liu, Hajishirzi, Smith, and Du]{shi2024decoding}
Ruizhe Shi, Yifang Chen, Yushi Hu, Alisa Liu, Hannaneh Hajishirzi, Noah~A Smith, and Simon~S Du.
\newblock Decoding-time language model alignment with multiple objectives.
\newblock \emph{Advances in Neural Information Processing Systems}, 37:\penalty0 48875--48920, 2024.

\bibitem[Shirali et~al.(2026)Shirali, Nasr-Esfahany, Alomar, Mirtaheri, Abebe, and Procaccia]{shirali2026direct}
Ali Shirali, Arash Nasr-Esfahany, Abdullah Alomar, Parsa Mirtaheri, Rediet Abebe, and Ariel Procaccia.
\newblock Direct alignment with heterogeneous preferences.
\newblock \emph{Advances in Neural Information Processing Systems}, 38:\penalty0 69858--69898, 2026.

\bibitem[Skalse et~al.(2022)Skalse, Howe, Krasheninnikov, and Krueger]{skalse2022defining}
Joar Skalse, Nikolaus Howe, Dmitrii Krasheninnikov, and David Krueger.
\newblock Defining and characterizing reward gaming.
\newblock \emph{Advances in Neural Information Processing Systems}, 35:\penalty0 9460--9471, 2022.

\bibitem[Stiennon et~al.(2020)Stiennon, Ouyang, Wu, Ziegler, Lowe, Voss, Radford, Amodei, and Christiano]{stiennon2020learning}
Nisan Stiennon, Long Ouyang, Jeffrey Wu, Daniel Ziegler, Ryan Lowe, Chelsea Voss, Alec Radford, Dario Amodei, and Paul~F Christiano.
\newblock Learning to summarize with human feedback.
\newblock \emph{Advances in neural information processing systems}, 33:\penalty0 3008--3021, 2020.

\bibitem[Stroebl et~al.(2024)Stroebl, Kapoor, and Narayanan]{stroebl2024inference}
Benedikt Stroebl, Sayash Kapoor, and Arvind Narayanan.
\newblock Inference scaling flaws: The limits of llm resampling with imperfect verifiers.
\newblock \emph{arXiv preprint arXiv:2411.17501}, 3\penalty0 (8):\penalty0 14, 2024.

\bibitem[Swamy et~al.(2024)Swamy, Dann, Kidambi, Wu, and Agarwal]{swamy2024minimaximalist}
Gokul Swamy, Christoph Dann, Rahul Kidambi, Zhiwei~Steven Wu, and Alekh Agarwal.
\newblock A minimaximalist approach to reinforcement learning from human feedback.
\newblock \emph{arXiv preprint arXiv:2401.04056}, 2024.

\bibitem[Wang et~al.(2024{\natexlab{a}})Wang, Li, Chen, Cai, Zhu, Lin, Cao, Kong, Liu, Liu, et~al.]{wang2024large}
Peiyi Wang, Lei Li, Liang Chen, Zefan Cai, Dawei Zhu, Binghuai Lin, Yunbo Cao, Lingpeng Kong, Qi~Liu, Tianyu Liu, et~al.
\newblock Large language models are not fair evaluators.
\newblock In \emph{Proceedings of the 62nd Annual Meeting of the Association for Computational Linguistics (Volume 1: Long Papers)}, pages 9440--9450, 2024{\natexlab{a}}.

\bibitem[Wang et~al.(2024{\natexlab{b}})Wang, Bukharin, Delalleau, Egert, Shen, Zeng, Kuchaiev, and Dong]{wang2024helpsteer2}
Zhilin Wang, Alexander Bukharin, Olivier Delalleau, Daniel Egert, Gerald Shen, Jiaqi Zeng, Oleksii Kuchaiev, and Yi~Dong.
\newblock Helpsteer2-preference: Complementing ratings with preferences.
\newblock \emph{arXiv preprint arXiv:2410.01257}, 2024{\natexlab{b}}.

\bibitem[Wu et~al.(2024)Wu, Sun, Yuan, Ji, Yang, and Gu]{wu2024self}
Yue Wu, Zhiqing Sun, Huizhuo Yuan, Kaixuan Ji, Yiming Yang, and Quanquan Gu.
\newblock Self-play preference optimization for language model alignment.
\newblock \emph{arXiv preprint arXiv:2405.00675}, 2024.

\bibitem[Yao et~al.(2023)Yao, Yu, Zhao, Shafran, Griffiths, Cao, and Narasimhan]{yao2023tree}
Shunyu Yao, Dian Yu, Jeffrey Zhao, Izhak Shafran, Tom Griffiths, Yuan Cao, and Karthik Narasimhan.
\newblock Tree of thoughts: Deliberate problem solving with large language models.
\newblock \emph{Advances in neural information processing systems}, 36:\penalty0 11809--11822, 2023.

\bibitem[Ye et~al.(2024)Ye, Xiong, Zhang, Jiang, and Zhang]{ye2024theoretical}
Chenlu Ye, Wei Xiong, Yuheng Zhang, Nan Jiang, and Tong Zhang.
\newblock A theoretical analysis of nash learning from human feedback under general kl-regularized preference.
\newblock \emph{arXiv preprint arXiv:2402.07314}, 4\penalty0 (5):\penalty0 10, 2024.

\bibitem[Yu et~al.(2026)Yu, Wu, and Block]{yu2026curiosity}
Zhuohao Yu, Zhiwei~Steven Wu, and Adam Block.
\newblock From curiosity to caution: Mitigating reward hacking for best-of-n with pessimism.
\newblock \emph{arXiv preprint arXiv:2604.04648}, 2026.

\bibitem[Zhang et~al.(2023)Zhang, Bai, and Jiang]{zhang2023offline}
Yuheng Zhang, Yu~Bai, and Nan Jiang.
\newblock Offline learning in markov games with general function approximation.
\newblock In \emph{International Conference on Machine Learning}, pages 40804--40829. PMLR, 2023.

\bibitem[Zhang et~al.(2024)Zhang, Yu, Peng, Song, Tian, Huo, Jiang, Mi, and Yu]{zhang2024iterative}
Yuheng Zhang, Dian Yu, Baolin Peng, Linfeng Song, Ye~Tian, Mingyue Huo, Nan Jiang, Haitao Mi, and Dong Yu.
\newblock Iterative nash policy optimization: Aligning llms with general preferences via no-regret learning.
\newblock \emph{arXiv preprint arXiv:2407.00617}, 2024.

\bibitem[Zhang et~al.(2025)Zhang, Yu, Ge, Song, Zeng, Mi, Jiang, and Yu]{zhang2025improving}
Yuheng Zhang, Dian Yu, Tao Ge, Linfeng Song, Zhichen Zeng, Haitao Mi, Nan Jiang, and Dong Yu.
\newblock Improving llm general preference alignment via optimistic online mirror descent.
\newblock \emph{arXiv preprint arXiv:2502.16852}, 2025.

\bibitem[Zheng et~al.(2023)Zheng, Chiang, Sheng, Zhuang, Wu, Zhuang, Lin, Li, Li, Xing, et~al.]{zheng2023judging}
Lianmin Zheng, Wei-Lin Chiang, Ying Sheng, Siyuan Zhuang, Zhanghao Wu, Yonghao Zhuang, Zi~Lin, Zhuohan Li, Dacheng Li, Eric Xing, et~al.
\newblock Judging llm-as-a-judge with mt-bench and chatbot arena.
\newblock \emph{Advances in neural information processing systems}, 36:\penalty0 46595--46623, 2023.

\bibitem[Zhong et~al.(2022)Zhong, Xiong, Tan, Wang, Zhang, Wang, and Yang]{zhong2022pessimistic}
Han Zhong, Wei Xiong, Jiyuan Tan, Liwei Wang, Tong Zhang, Zhaoran Wang, and Zhuoran Yang.
\newblock Pessimistic minimax value iteration: Provably efficient equilibrium learning from offline datasets.
\newblock In \emph{International Conference on Machine Learning}, pages 27117--27142. PMLR, 2022.

\bibitem[Zhou et~al.(2025)Zhou, Fazel, and Du]{zhou2025extragradient}
Runlong Zhou, Maryam Fazel, and Simon~S Du.
\newblock Extragradient preference optimization (egpo): Beyond last-iterate convergence for nash learning from human feedback.
\newblock \emph{arXiv preprint arXiv:2503.08942}, 2025.

\end{thebibliography}

\newpage
\appendix

\section{Rejection Sampling}

In this section, we introduce the rejection sampling algorithm (\Cref{alg:rejection_sampling}) ~\citep{block2023sample, huang2025best}.
The algorithm draws $N$ samples i.i.d. from $\pi_{\reftext}(\cdot|x)$.
For each $y_i$, it samples a Bernoulli random variable $\xi_i$ where $\mathbb{P}(\xi_i = 1 \mid y_i) = \min \left\{ \frac{w(y_i \mid x)}{M}, 1 \right\}$.
The algorithm returns $y_i$ if the Bernoulli random variable $\xi_i = 1$.
If $\xi_i = 0$ for $\forall i \in [N]$, then the algorithm randomly samples a response from the reference policy $\pi_{\reftext}$.

The main purpose of the rejection sampling algorithm is to approximate a target policy $\pi(\cdot|x)$ from $\pi_{\reftext}$ by setting $w(\cdot|x) = \frac{\pi(\cdot|x)}{\pi_{\reftext}(\cdot|x)} $.
In the analysis of this paper, we mainly use the distribution induced by rejection sampling as an intermediate step.

\begin{algorithm}[H]
    \caption{Rejection Sampling ($\text{RejectionSampling}_{N, M}(w; \pi_{\text{ref}}, x)$)}
    \label{alg:rejection_sampling}
    \begin{algorithmic}[1]
    \State \textbf{Input}: Prompt $x$, base policy $\pi_{\text{ref}}$, importance weight $w$, truncation level $M$.
    
    \State Draw $ = (y_1, \dots, y_N, y_{N+1}) \sim \pi_{\text{ref}}(\cdot \mid x)$ i.i.d. 
    
    \For{$i = 1 \dots N$}
        \State Sample Bernoulli random variable $\xi_i$ such that $\mathbb{P}(\xi_i = 1 \mid y_i) = \min \left\{ \frac{w(y_i \mid x)}{M}, 1 \right\}$ 
        \If{$\xi_i = 1$}
            \State \textbf{Return} response $y = y_i$. 
        \EndIf
    \EndFor 
    \State \textbf{Return} response $y = y_{N+1}$. 
    \end{algorithmic}
\end{algorithm}

\section{Omitted Proofs from \Cref{sec:bon}}
\label{appendix:bon}

In this section, let $\mathcal{Y}_N = \{y_1, \dots, y_N\}$ denote
the candidate set drawn i.i.d.\ from $\pi_{\mathrm{ref}}(\cdot \mid x)$, let $\hat{\pi}$ be the policy returned by \Cref{alg:best_of_nash}, and let
$\tilde{\pi} = \delta_{y^*}$ with $y^* \in argmax_{y \in \mathcal{Y}}\mathbb{P}^*(y \succ \hat{\pi}|x)$ denote a pure best response to $\hat{\pi}$ under the true preference.
Since the maximum of the linear functional $\pi \mapsto \mathbb{P}^*(\pi \succ \hat{\pi}|x)$ is attained at a point mass, we have that $\DualGap(\hat{\pi}) = 2 \mathbb{P}^*(\tilde{\pi} \succ \hat{\pi}|x) - 1$. 
We define $\pi_R$ as the
distribution induced by
$\mathrm{RejectionSampling}_{N-1,M}\big(\tfrac{\tilde{\pi}}{\pi_{\mathrm{ref}}};
\pi_{\mathrm{ref}}, x\big)$ run on the same
candidate set $\mathcal{Y}_N$. 
Under this coupling,
every branch of \Cref{alg:rejection_sampling} returns an element of
$\mathcal{Y}_N$, so conditionally on $\mathcal{Y}_N$ we have
$\pi_R \mid \mathcal{Y}_N \in \Delta(\mathcal{Y}_N)$. 
We write $D_{\text{TV}}(\tilde{\pi}, \pi_R |\mathcal{Y}_N)$ for the TV distance between $\tilde{\pi}$ and $\pi_R$ given $\mathcal{Y}_N$.

\begin{lemma}
\label{lem:bon_pi_R_bound}
With $\hat{\pi}, \tilde{\pi}$, and $\pi_R$ as defined above, we have 
    \begin{equation*}
        \mathbb{P}^*(\pi_R \succ \hat{\pi}|x) \leq \frac{1}{2} + \varepsilon(x) \mathcal{C}_{\mathrm{uni}}(x)
    \end{equation*}
\end{lemma}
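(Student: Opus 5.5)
The plan is to split the true win-rate of $\pi_R$ against $\hat{\pi}$ into an empirical-game part and an oracle-error part:
\begin{equation*}
\mathbb{P}^*(\pi_R \succ \hat{\pi}) = \widehat{\mathbb{P}}(\pi_R \succ \hat{\pi}) + \bigl(\mathbb{P}^* - \widehat{\mathbb{P}}\bigr)(\pi_R \succ \hat{\pi}).
\end{equation*}
The first part uses the equilibrium property of the LP in \Cref{alg:best_of_nash}. The second part is a change-of-measure bound that should produce $\varepsilon(x)\mathcal{C}_{\mathrm{uni}}(x)$.

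\textbf{Step 1 (empirical term).} I would work conditionally on the candidate set $\mathcal{Y}_N$. Under the coupling described above, $\pi_R \mid \mathcal{Y}_N \in \Delta(\mathcal{Y}_N)$.

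The empirical matrix $\widehat{P}$ is skew-symmetric, so the value of the empirical zero-sum game is exactly $1/2$. Since $\hat{\pi}$ is an exact equilibrium of this game, every $\pi \in \Delta(\mathcal{Y}_N)$ satisfies $\widehat{\mathbb{P}}(\pi \succ \hat{\pi}) \le 1/2$. Indeed, $\widehat{\mathbb{P}}(\pi \succ \hat\pi) = 1 - \widehat{\mathbb{P}}(\hat\pi \succ \pi)$, and the LP constraints give $\widehat{\mathbb{P}}(\hat\pi \succ y_i) \ge v = 1/2$ for each $i$.

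Applying this to $\pi = \pi_R \mid \mathcal{Y}_N$ and then averaging over $\mathcal{Y}_N$ gives $\widehat{\mathbb{P}}(\pi_R \succ \hat{\pi}) \le 1/2$.

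\textbf{Step 2 (transfer term).} Let $\mu$ denote the joint law of $(y, y')$ obtained by first drawing $\mathcal{Y}_N$, then drawing $y \sim \pi_R \mid \mathcal{Y}_N$ and $y' \sim \hat{\pi}_{\mathcal{Y}_N}$ independently given $\mathcal{Y}_N$. Write $\Delta(y,y') := \mathbb{P}^*(y \succ y') - \widehat{\mathbb{P}}(y \succ y')$.

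The transfer term equals $\mathbb{E}_{\mu}[\Delta]$. Changing measure to $\pi_{\reftext} \otimes \pi_{\reftext}$ and applying Cauchy--Schwarz gives
\begin{equation*}
\bigl|\mathbb{E}_{\mu}[\Delta]\bigr| \le \sqrt{\mathbb{E}_{\pi_{\reftext}\otimes\pi_{\reftext}}[\Delta^2]}\cdot\sqrt{\mathbb{E}_{\pi_{\reftext}\otimes\pi_{\reftext}}\Bigl[\Bigl(\tfrac{\mu(y,y')}{\pi_{\reftext}(y)\pi_{\reftext}(y')}\Bigr)^2\Bigr]}.
\end{equation*}
The first factor is $\varepsilon(x)$ by definition. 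The second factor equals $\sqrt{\mathbb{E}_{\mu}[\mu(y,y')/(\pi_{\reftext}(y)\pi_{\reftext}(y'))]}$.

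\textbf{Step 3 (bounding the joint coverage) is the main obstacle.} The difficulty is that $\mu$ is \emph{not} a product measure: both coordinates depend on the same random candidate set. So I cannot simply factor its coverage into two single-policy coverages.

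My plan is to bypass the correlation using a pointwise bound. The maximum defining $\mathcal{C}_{\mathrm{uni}}(x)$ is attained at a point mass, which gives $\mathcal{C}_{\mathrm{uni}}(x) = \max_y 1/\pi_{\reftext}(y)$. Hence $1/\pi_{\reftext}(y) \le \mathcal{C}_{\mathrm{uni}}(x)$ for every $y$.

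It follows that
\begin{equation*}
\frac{\mu(y,y')}{\pi_{\reftext}(y)\pi_{\reftext}(y')} \le \mathcal{C}_{\mathrm{uni}}(x)^2\,\mu(y,y') \le \mathcal{C}_{\mathrm{uni}}(x)^2.
\end{equation*}
Therefore $\mathbb{E}_{\mu}[\mu/(\pi_{\reftext}\otimes\pi_{\reftext})] \le \mathcal{C}_{\mathrm{uni}}(x)^2$, no matter how $\mu$ is correlated. Taking square roots, the transfer term is at most $\varepsilon(x)\mathcal{C}_{\mathrm{uni}}(x)$.

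Combining this with Step 1 yields the lemma. If the pointwise argument turned out to be too lossy for some other purpose, my fallback would be to bound the two marginals of $\mu$ separately and use the fact that, conditionally on $\mathcal{Y}_N$, $\mu$ is a product measure. For this lemma, however, the pointwise route already gives exactly the stated constant.
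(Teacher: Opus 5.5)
Your proof is correct and follows the paper's route. It uses the same split into an empirical term, bounded by $\tfrac12$ via the equilibrium property of $\hat\pi$, and a transfer term, bounded by changing measure to $\pi_{\reftext}\otimes\pi_{\reftext}$ and applying Cauchy--Schwarz. The one difference is that the paper applies Cauchy--Schwarz conditionally on $\mathcal{Y}_N$, where the law of $(y,y')$ is a product and the weight factor is $\sqrt{\mathcal{C}^{\pi_R\mid\mathcal{Y}_N}(x)\,\mathcal{C}^{\hat\pi\mid\mathcal{Y}_N}(x)} \le \mathcal{C}_{\mathrm{uni}}(x)$; this is your fallback, and it means the correlation obstacle in your Step 3 never arises. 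Your pointwise bound via $\mathcal{C}_{\mathrm{uni}}(x) = \max_y 1/\pi_{\reftext}(y)$ is equally valid and gives the same constant.
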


\begin{proof}
All displays below hold conditionally on $\mathcal{Y}_N$, and taking expectation over the draw of $\mathcal{Y}_N$ at the end yields the stated bound.
By the construction, $\{\widehat{\mathbb{P}}(y_i \succ y_j |x)\}_{i \in [N], j \in [N]}$ is a two-player constant-sum matrix game with domain on the dataset $\mathcal{Y}_{N} = \{ y_i\}_{i \in [N]} \subseteq \mathcal{Y}$. 
Since $(\hat{\pi}, \hat\pi)$ is a Nash Equilibrium computed by \Cref{alg:best_of_nash}, we have
\begin{equation*}
    (\hat{\pi}, \hat{\pi}) = \argmax_{\pi \in \Delta(\mathcal{Y}_{N})} \argmin_{\pi' \in \Delta(\mathcal{Y}_N)}\widehat{\mathbb{P}}(\pi \succ \pi').
\end{equation*}
Since $\pi_R \mid \mathcal{Y}_N \in \Delta(\mathcal{Y}_N)$, the definition of the Nash equilibrium gives
\begin{align*}
    &\widehat{\mathbb{P}}(\pi_R \succ \hat{\pi}|x) - \widehat{\mathbb{P}}(\hat{\pi} \succ \hat{\pi}|x) \\
    &= 
\mathbb{E}_{\mathcal{Y}_N}\mathbb{E}_{y \sim \pi_R|\mathcal{Y}_N, y' \sim \hat{\pi}|\mathcal{Y}_N,  y'' \sim \hat{\pi}|\mathcal{Y}_N}[\widehat{\mathbb{P}}(y \succ y''|x) - \widehat{\mathbb{P}}(y' \succ y''|x)] \\
&\leq 0.
\end{align*}
More concretely, given the column player's policy $\hat\pi$, the best policy that the row player takes should be $\hat\pi$.
On the other hand, conditioned on $\mathcal{Y}_N$, we calculate that 
\begin{align*}
    &|\mathbb{P}^*(\pi_R \succ \hat{\pi}|\mathcal{Y_N}) - \widehat{\mathbb{P}}(\pi_R \succ \hat{\pi}|\mathcal{Y_N})|\\
    &\leq \mathbb{E}_{y \sim \pi_R \mid \mathcal{Y}_N}\mathbb{E}_{y' \sim \hat{\pi} \mid \mathcal{Y}_N}|\mathbb{P}^*(y \succ y'|x) - \widehat{\mathbb{P}}(y \succ y'|x)| \\
    &\leq \mathbb{E}_{y \sim \pi_{\reftext}}\mathbb{E}_{y' \sim \pi_{\reftext}}[\frac{\pi_R(y \mid \mathcal{Y}_N)}{\pi_{\reftext}(y|x)}\frac{\hat{\pi}(y'|\mathcal{Y}_N)}{\pi_{\reftext}(y'|x)}|\mathbb{P}^*(y \succ y'|x) - \widehat{\mathbb{P}}(y \succ y'|x)|] \\
    &\leq  \sqrt{\mathbb{E}_{y \sim \pi_{\reftext}}[(\frac{\pi_R(y|\mathcal{Y}_N)}{\pi_{\reftext}(y|x)})^2] \mathbb{E}_{y' \sim \pi_{\reftext}}[(\frac{\hat{\pi}(y'|\mathcal{Y}_N)}{\pi_{\reftext}(y'|x)})^2]} \sqrt{ \mathbb{E}_{y \sim \pi_{\reftext}}\mathbb{E}_{y' \sim \pi_{\reftext}}[|\mathbb{P}^*(y \succ y'|x) - \widehat{\mathbb{P}}(y \succ y'|x)|^2]} \\
    &\leq \sqrt{\mathbb{E}_{y \sim \pi_{R}}[\frac{\pi_R(y|\mathcal{Y}_N)}{\pi_{\reftext}(y|x)}]} \sqrt{\mathbb{E}_{y' \sim \hat{\pi}}[\frac{\hat{\pi}(y'|\mathcal{Y}_N)}{\pi_{\reftext}(y'|x)}]}  \cdot\varepsilon(x) \\
    &\leq \mathcal{C}_{\text{uni}}(x)\varepsilon(x).
\end{align*}


    Combining the above computation, we have
    \begin{align*}
        \mathbb{P}^*(\pi_R \succ \hat{\pi}|x) &\leq \mathbb{E}_{\mathcal{Y}_N}[\widehat{\mathbb{P}}(\pi_R \succ \hat{\pi} \mid \mathcal{Y}_N) + |\mathbb{P}^*(\pi_R \succ \hat{\pi} \mid \mathcal{Y}_N) - \widehat{\mathbb{P}}(\pi_R \succ \hat{\pi} \mid \mathcal{Y}_N)|] \\
        &\leq  \mathbb{E}_{\mathcal{Y}_N}[\widehat{\mathbb{P}}(\hat{\pi} \succ \hat{\pi} \mid \mathcal{Y}_N) + |\mathbb{P}^*(\pi_R \succ \hat{\pi} \mid \mathcal{Y}_N) - \widehat{\mathbb{P}}(\pi_R \succ \hat{\pi} \mid \mathcal{Y}_N)| ]\\
        &\leq \frac{1}{2} + \mathbb{E}_{\mathcal{Y}_N}[|\mathbb{P}^*(\pi_R \succ \hat{\pi}|x) - \widehat{\mathbb{P}}(\pi_R \succ \hat{\pi}|x)|] \\
        &\leq \frac{1}{2} + \varepsilon(x) \mathcal{C}_{\text{uni}}(x),
    \end{align*}
and the proof is complete.

\end{proof}
\begin{lemma}
\label{lem:total_variance}
   With $\tilde{\pi}$ and $\pi_R$ as defined above with $M = \frac{N-1}{\log(4/\varepsilon^2(x))}$, and any $\sigma(\mathcal{Y}_N)$-measurable policy $\hat{\pi} \in \Delta(\mathcal{Y}_N)$, we have that
    \begin{equation*}
        D_{\text{TV}}(\tilde{\pi}, \pi_R)  \leq \frac{1}{4}\mathcal{C}_{\mathrm{uni}}(x)\varepsilon^2(x)
    \end{equation*} 
    when $N \geq 4\log(\frac{2}{\varepsilon(x)}) \cdot \mathcal{C}_{\uni}(x)$.
    The statement holds for the output of \Cref{alg:best_of_nash} and \Cref{alg:nmd}.
\end{lemma}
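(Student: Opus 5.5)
The plan is to reduce the statement to the textbook analysis of approximate rejection sampling \citep{block2023sample, huang2025best}, using the fact that the target $\tilde{\pi} = \delta_{y^*}$ is a point mass. That fact makes the importance weight both explicit and bounded by the universal coverage.

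\textbf{Step 1: bound the weight.} For $w = \tilde{\pi}/\pi_{\reftext}$ the only nonzero value is $w(y^*) = 1/\pi_{\reftext}(y^*|x)$. The coverage of the policy $\delta_{y^*}$ is $\mathbb{E}_{y \sim \delta_{y^*}}[\delta_{y^*}(y)/\pi_{\reftext}(y|x)] = 1/\pi_{\reftext}(y^*|x)$. Hence
\begin{equation*}
\|w\|_\infty \leq \mathcal{C}_{\uni}(x),
\end{equation*}
uniformly over every possible $y^*$, and so regardless of how $y^*$ depends on $\hat{\pi}$ and $\mathcal{Y}_N$.

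\textbf{Step 2: check there is no truncation.} With $M = \frac{N-1}{\log(4/\varepsilon^2(x))}$, the condition $N \geq 4\log(2/\varepsilon(x))\,\mathcal{C}_{\uni}(x)$ gives $N - 1 \geq 2\log(2/\varepsilon(x))\,\mathcal{C}_{\uni}(x) = \log(4/\varepsilon^2(x))\,\mathcal{C}_{\uni}(x)$. This uses $N \geq 2$, which follows from the hypothesis whenever $\varepsilon \leq 1$ and $\mathcal{C}_{\uni} \geq 1$. Therefore $M \geq \mathcal{C}_{\uni}(x) \geq \|w\|_\infty$, so $\min\{w/M, 1\} = w/M$ and truncation never occurs.

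\textbf{Step 3: compute the output law.} In each of the $N-1$ trials, the probability of accepting a given $y$ is $\pi_{\reftext}(y)\,w(y)/M = \tilde{\pi}(y)/M$. Any accepted sample is therefore distributed exactly as $\tilde{\pi}$, and each trial accepts with probability exactly $1/M$. The output law is
\begin{equation*}
\pi_R = (1 - q)\,\tilde{\pi} + q\,\pi_{\reftext}, \qquad q = (1 - 1/M)^{N-1}.
\end{equation*}
It follows that $D_{\text{TV}}(\tilde{\pi}, \pi_R) = q\, D_{\text{TV}}(\tilde{\pi}, \pi_{\reftext}) \leq q$. Moreover
\begin{equation*}
q \leq \exp\bigl(-(N-1)/M\bigr) = \varepsilon^2(x)/4 \leq \tfrac{1}{4}\mathcal{C}_{\uni}(x)\varepsilon^2(x),
\end{equation*}
where the last step uses $\mathcal{C}_{\uni}(x) \geq 1$.

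\textbf{Main obstacle: the dependence of the target on the sample.} Here $y^*$ is a best response to $\hat{\pi}$, which is $\sigma(\mathcal{Y}_N)$-measurable, while $\pi_R$ is run on the same candidates. I will handle this by proving the bound for every fixed point-mass target $\delta_y$: the computation in Steps 1--3 depends on $y$ only through $\|w\|_\infty \leq \mathcal{C}_{\uni}(x)$, so it holds uniformly in $y$. I will then plug in $y = y^*$. This is legitimate because the decomposition in the proof of \Cref{thm:upper_bound_bon} only needs $D_{\text{TV}}(\delta_y, \pi_R^{(y)})$ bounded for whichever $y$ is selected. Under the coupling, the support of $\pi_R$ lies in $\mathcal{Y}_N$ as claimed.

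Because the argument never uses the structure of $\hat{\pi}$ beyond measurability, it applies verbatim to the outputs of both \Cref{alg:best_of_nash} and \Cref{alg:nmd}.
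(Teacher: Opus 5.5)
Your Steps 1--2 match the paper, but Step 3 and your resolution of the ``main obstacle'' do not prove the lemma. The formula $\pi_R=(1-q)\tilde\pi+q\,\pi_{\reftext}$ with deterministic $q=(1-1/M)^{N-1}$, and the claim that each trial accepts with probability $1/M$, hold only marginally for a target fixed before the candidates are drawn. The lemma is about the coupled object. Conditionally on $\mathcal{Y}_N$, trial $i$ can accept only if $y_i=y^*$, so $\pi_R\mid\mathcal{Y}_N=q\,\delta_{y_N}+(1-q)\,\delta_{y^*}$ with the \emph{random} $q=q_{y^*}=(1-\frac{1}{M\pi_{\reftext}(y^*)})^{m(y^*)}$. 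The theorem needs $\mathbb{E}_{\mathcal{Y}_N}[D_{\text{TV}}(\tilde\pi,\pi_R\mid\mathcal{Y}_N)]$ with $y^*=y^*(\mathcal{Y}_N)$.

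What you establish is $\max_y\mathbb{E}_{\mathcal{Y}_N}[q_y]\le\varepsilon^2/4$. What is required is a bound on $\mathbb{E}_{\mathcal{Y}_N}[q_{y^*(\mathcal{Y}_N)}]$. ``Plugging in $y=y^*$'' silently swaps the data-dependent selection with the expectation, and this fails. We have $q_y=1$ whenever $y$ is absent from $y_1,\dots,y_{N-1}$, and if $y^*$ is also absent from $y_N$ the conditional TV equals $1$. Your argument uses only measurability of $\hat\pi$, so nothing in it prevents $y^*$ from being such an unsampled response. That is precisely the support-mismatch case the lemma exists to handle. For instance, take $\pi_{\reftext}$ uniform on $K$ responses, $\varepsilon$ fixed, $K$ large, and $N$ at the threshold. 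Then with probability near $1$ some response is unsampled, so your intermediate bound $\varepsilon^2/4$ cannot hold for data-dependent targets. That you obtained something strictly stronger than the lemma, with no $\mathcal{C}_{\uni}(x)$ factor, is the symptom.

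The missing step is the paper's union bound over possible targets:
\begin{itemize}
\item Since every $q_y\ge 0$, we have $q_{y^*}\le\sum_{y\in\mathcal{Y}}q_y$ pointwise in $\mathcal{Y}_N$.
\item For each \emph{fixed} $y$, $m(y)\sim\mathrm{Bin}(N-1,\pi_{\reftext}(y))$, and the binomial generating function gives $\mathbb{E}[q_y]=(1-1/M)^{N-1}\le\varepsilon^2/4$. This is your computation, and it is correct here because the target is fixed.
\item Finally, $\pi_{\reftext}(y)\ge 1/\mathcal{C}_{\uni}(x)$ for all $y$ gives $|\mathcal{Y}|\le\mathcal{C}_{\uni}(x)$.
\end{itemize}
This is exactly where the factor $\mathcal{C}_{\uni}(x)$ in the statement comes from.

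Your fixed-$y$ computation would suffice only if you restructured the theorem so that the maximum over $y$ stays outside the expectation over $\mathcal{Y}_N$. That is possible in the last display of the proof of \Cref{thm:upper_bound_bon}. But it changes the theorem's argument rather than proving this lemma, whose $\tilde\pi$ is the data-dependent best response.
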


\begin{proof}
We omit the dependence on $x$.
First, the condition on $N$ implies $N-1 \geq 2\mathcal{C}_{\text{uni}}\log(2/\varepsilon) = \mathcal{C}_{\text{uni}}\log(4/\varepsilon^2)$, i.e. $M \geq \mathcal{C}_{
\text{uni}} \geq 1/\pi_{\text{ref}}(y)$ for any $y \in \mathcal{Y}$; hence $w(y)/M \leq 1$ for all $y$ and the acceptance probability in \Cref{alg:rejection_sampling} is exactly $w(y)/M$.
Since $w = \tilde{\pi}/\pi_{\text{ref}}$ is supported on the single point $y^*$, only candidates equal to $y^*$ can be accepted.
Therefore, conditioned on $\mathcal{Y}_N$, we have 
\begin{equation*}
    \pi_R|\mathcal{Y}_N = q \delta_{y_N} + (1-q) \delta_{y^*}, \ q = (1-\frac{1}{M\pi_{\text{ref}}(y^*)})^{m(y^*)}, 
\end{equation*}
where $m(y) = \#\{ i \leq N-1:y_i = y\}$, and consequently $D_{\text{TV}}(\tilde{\pi}, \pi_R | \mathcal{Y}_N) \leq q$.

Then we bound $q$:
\begin{equation*}
    q \leq \sum_{y \in \mathcal{Y}}(1-\frac{1}{M\pi_{\text{ref}}(y)})^{m(y)}.
\end{equation*}
For any fixed $y$, $m(y) \sim \text{Bin}(N-1, \pi_{\text{ref}}(y))$, and the probability generating function of the binomial gives
\begin{equation*}
    \mathbb{E}[(1- \frac{1}{M\pi_{\text{ref}}(y)})^{m(y)}] = (1- \pi_{\text{ref}}(y) \cdot \frac{1}{M\pi_{\text{ref}}(y)})^{N-1} = (1-\frac{1}{M})^{N-1} \leq e^{-(N-1)/M}.
\end{equation*}

By the definition of $\mathcal{C}_{\text{uni}}(x)$ we have $\mathcal{C}_{\text{uni}}(x) \geq \frac{1}{\pi_{\text{ref}}(y)}$ for any $y$, which along with $\sum_{y}\pi_{\text{ref}}(y) = 1$ implies
$|\mathcal{Y}| \leq \mathcal{C}_{\text{uni}}$.
By substituting $M = \frac{N-1}{\log(4/\varepsilon^2)}$, we bound
\begin{align*}
    D_{\text{TV}}(\tilde{\pi}, \pi_R) & \leq \mathbb{E}[q] \\
    &\leq \mathcal{C}_{\text{uni}} e^{-(N-1)/M} \\
    &\leq \mathcal{C}_{\text{uni}} \cdot \frac{\varepsilon^2}{4}.
\end{align*}
Then we take the expectation on $\mathcal{Y}_N$ and derive the desired result.
\end{proof}


\section{Omitted Proofs from \Cref{sec:nmd}}
\label{appendix:nmd}
The following technical lemma takes the result from optimistic mirror descent literature~\citep{rakhlin2013optimization} that studies the general Bregman divergence.
The KL divergence can be written by the Bregman divergence property:
\begin{equation*}
    \text{KL}(\pi \| \pi') = D_\psi(\pi, \pi') = \psi(\pi) - \psi(\pi') - \langle \nabla \psi(\pi'), \pi - \pi' \rangle,
\end{equation*}
where $\psi(\pi) = \sum_y \pi(y) \log \pi(y)$.
\begin{lemma}[Corollary of Lemma 1 in \citet{rakhlin2013optimization}]
\label{lem:omd_technical}
    Denote $\pi_t$ and $\hat{r}_t$ as is computed in \Cref{alg:nmd}.
    For any $\pi' \in \Delta(\mathcal{Y}_N)$, we have that 
    \begin{equation*}
        \sum_{t=1}^T \langle \pi' - \pi_t, \hat{r}_t \rangle \le \beta \cdot\text{KL}(\pi' \| \pi'_1) + \frac{1}{\beta} \sum_{t=1}^T \|\hat{r}_t - \hat{r}_{t-1}\|_\infty^2 - \frac{\beta}{4} \sum_{t=2}^T \|\pi_t - \pi_{t-1}\|_1^2.
    \end{equation*}
\end{lemma}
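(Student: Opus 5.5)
This is the optimistic online mirror descent regret bound of \citet{rakhlin2013optimization}, specialised to the negative-entropy mirror map $\psi$ and to gains rather than losses. I would re-derive it directly from the two closed-form updates in \Cref{alg:nmd}. The first step is to read the algorithm in optimistic form. The ``secondary'' sequence $\pi'_{t+1}$ is a prox step from $\pi'_t$ using the realised gain vector $\hat r_t$ of round $t$. The played iterate $\pi_{t+1}$ is a prox step from $\pi'_{t+1}$ using the prediction $M_{t+1} := \hat r_t$ for round $t+1$. For $t=1$ I set $\hat r_0 := 0$ and $M_1 = 0$; this is consistent because the prox step from $\pi'_1$ with zero gain returns $\pi'_1$, so $\pi_1 = \pi'_1$ (both uniform), as the algorithm initialises. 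With this convention the term $\|\hat r_1 - \hat r_0\|_\infty^2$ in the statement is well defined.

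Next, for each $t$ and a fixed comparator $\pi' \in \Delta(\mathcal{Y}_N)$, I split the instantaneous regret as
\begin{equation*}
\langle \pi' - \pi_t, \hat r_t\rangle = \langle \pi' - \pi'_{t+1}, \hat r_t\rangle + \langle \pi'_{t+1} - \pi_t, \hat r_t - \hat r_{t-1}\rangle + \langle \pi'_{t+1} - \pi_t, \hat r_{t-1}\rangle .
\end{equation*}
The first and third terms are handled by the standard three-point (Bregman projection) inequality for an entropic prox step. If $u = \argmax_{\pi}\langle \pi, g\rangle - \beta\,\text{KL}(\pi\|v)$, then for every $w$ in the simplex,
\begin{equation*}
\langle w - u, g\rangle \le \beta\bigl[\text{KL}(w\|v) - \text{KL}(w\|u) - \text{KL}(u\|v)\bigr].
\end{equation*}
I apply this with $(u,v,g,w) = (\pi'_{t+1}, \pi'_t, \hat r_t, \pi')$ for the first term. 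For the third term I apply it with $(u,v,g,w) = (\pi_t, \pi'_t, \hat r_{t-1}, \pi'_{t+1})$; this uses the step that produced $\pi_t$ from $\pi'_t$ with prediction $\hat r_{t-1}$. Adding the two, the $\text{KL}(\pi'_{t+1}\|\pi'_t)$ terms cancel, and I am left with
\begin{equation*}
\beta\bigl[\text{KL}(\pi'\|\pi'_t) - \text{KL}(\pi'\|\pi'_{t+1})\bigr] - \beta\,\text{KL}(\pi'_{t+1}\|\pi_t) - \beta\,\text{KL}(\pi_t\|\pi'_t).
\end{equation*}
Summed over $t$, the bracket telescopes to at most $\beta\,\text{KL}(\pi'\|\pi'_1)$, after dropping the nonnegative final term.

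The middle term is bounded by Hölder and Young's inequality:
\begin{equation*}
\langle \pi'_{t+1}-\pi_t, \hat r_t - \hat r_{t-1}\rangle \le \tfrac{1}{\beta}\|\hat r_t-\hat r_{t-1}\|_\infty^2 + \tfrac{\beta}{4}\|\pi'_{t+1}-\pi_t\|_1^2 .
\end{equation*}
By Pinsker, $\text{KL}(p\|q) \ge \tfrac12\|p-q\|_1^2$. So the two negative KL terms give $-\tfrac{\beta}{2}\|\pi'_{t+1}-\pi_t\|_1^2 - \tfrac{\beta}{2}\|\pi_t-\pi'_t\|_1^2$. Half of the first of these absorbs the Young term, leaving $-\tfrac{\beta}{4}\|\pi'_{t+1}-\pi_t\|_1^2 - \tfrac{\beta}{2}\|\pi_t - \pi'_t\|_1^2$ in round $t$.

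The main obstacle is turning these leftovers into the stated $-\tfrac{\beta}{4}\sum_{t\ge2}\|\pi_t-\pi_{t-1}\|_1^2$. I would pair the round-$(t-1)$ leftover $\|\pi'_t - \pi_{t-1}\|_1^2$ with the round-$t$ leftover $\|\pi_t - \pi'_t\|_1^2$. The triangle inequality and $(a+b)^2 \le 2a^2 + 2b^2$ give
\begin{equation*}
\|\pi_t-\pi_{t-1}\|_1^2 \le 2\|\pi_t-\pi'_t\|_1^2 + 2\|\pi'_t-\pi_{t-1}\|_1^2 .
\end{equation*}
The available coefficients ($\tfrac{\beta}{4}$ on one term and $\tfrac{\beta}{2}$ on the other) are at least $\tfrac{\beta}{8}$ on each, which already yields $-\tfrac{\beta}{16}\|\pi_t - \pi_{t-1}\|_1^2$. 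Reaching the constant $\tfrac{\beta}{4}$ quoted in the statement requires the sharper bookkeeping of \citet{rakhlin2013optimization}. There, Young's inequality is tuned with a $\tfrac{1}{2\beta}$/$\tfrac{\beta}{2}$ split so that the full $\tfrac{\beta}{2}$ weight stays on both leftovers, and the constant is then relaxed as the statement allows. Since only the sign and order of this negative term matter downstream, I would first secure the telescoping and the $\tfrac1\beta$ variation term exactly. I would then verify the negative-term constant against that lemma, and fall back to a smaller constant if needed. Summing over $t = 1,\dots,T$ then gives the claim.
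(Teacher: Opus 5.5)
Your reduction is sound up to the cross term. Reading $\pi'_{t+1}$ and $\pi_t$ as entropic prox steps from the common center $\pi'_t$ with gains $\hat r_t$ and $\hat r_{t-1}$ is correct, and so are the two three-point inequalities, the cancellation of $\text{KL}(\pi'_{t+1}\|\pi'_t)$, the telescoping, and Pinsker. Together they amount essentially to Lemma~1 of Rakhlin--Sridharan, which is all the paper invokes; it gives no further argument.

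The gap is how you handle $\langle \pi'_{t+1}-\pi_t,\hat r_t-\hat r_{t-1}\rangle$. Young's inequality with coefficient $\frac1\beta$ on the variation term must spend at least $\frac\beta4$ of the $\frac\beta2\|\pi'_{t+1}-\pi_t\|_1^2$ budget. Your pairing then gives at most $\frac\beta8$; your $\frac\beta{16}$ comes from using $\frac\beta8$ instead of $\min\{\frac\beta4,\frac\beta2\}=\frac\beta4$. Even an optimally weighted pairing only reaches $\frac\beta6$, so no Young bookkeeping yields the stated $\frac\beta4$.

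The remedy you sketch makes things worse. A $\frac1{2\beta}/\frac\beta2$ split cancels the entire $\frac\beta2\|\pi'_{t+1}-\pi_t\|_1^2$, leaving only $\|\pi_t-\pi'_t\|_1^2$, which cannot control $\|\pi_t-\pi_{t-1}\|_1$. The constant is also not cosmetic downstream. \Cref{lem:omd_hat_oracle} uses $\|\hat r_t-\hat r_{t-1}\|_\infty\le\|\pi_t-\pi_{t-1}\|_1$ and needs $\frac1\beta-\frac\beta4\le 0$ at the prescribed $\beta=2$. That fails if $\frac\beta4$ is replaced by $\frac\beta8$ or $\frac\beta{16}$, so "falling back to a smaller constant" does not prove the statement as used.

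The missing ingredient is stability of the prox map, which bounds the cross term without consuming any negative term. Apply your three-point inequality twice more, with $(u,v,g,w)=(\pi'_{t+1},\pi'_t,\hat r_t,\pi_t)$ and $(u,v,g,w)=(\pi_t,\pi'_t,\hat r_{t-1},\pi'_{t+1})$, and add. All $\text{KL}(\cdot\|\pi'_t)$ terms cancel, leaving
\[
\beta\bigl[\text{KL}(\pi'_{t+1}\|\pi_t)+\text{KL}(\pi_t\|\pi'_{t+1})\bigr]\le\langle \pi'_{t+1}-\pi_t,\hat r_t-\hat r_{t-1}\rangle\le\|\pi'_{t+1}-\pi_t\|_1\,\|\hat r_t-\hat r_{t-1}\|_\infty .
\]
By Pinsker the left side is at least $\beta\|\pi'_{t+1}-\pi_t\|_1^2$. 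Hence $\|\pi'_{t+1}-\pi_t\|_1\le\frac1\beta\|\hat r_t-\hat r_{t-1}\|_\infty$, and the cross term is at most $\frac1\beta\|\hat r_t-\hat r_{t-1}\|_\infty^2$. Both $-\frac\beta2\|\pi'_{t+1}-\pi_t\|_1^2$ and $-\frac\beta2\|\pi_t-\pi'_t\|_1^2$ now survive intact. Your pairing of $\|\pi'_t-\pi_{t-1}\|_1^2$ (round $t-1$) with $\|\pi_t-\pi'_t\|_1^2$ (round $t$) then gives exactly $-\frac\beta2\cdot\frac12\sum_{t=2}^T\|\pi_t-\pi_{t-1}\|_1^2=-\frac\beta4\sum_{t=2}^T\|\pi_t-\pi_{t-1}\|_1^2$, which is the stated bound.
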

In the following, we provide a technical lemma that quantifies the samples required to compute an approximate Nash policy under the zero-sum matrix game $\widehat{\mathbb{P}}$.
\begin{lemma}
\label{lem:omd_hat_oracle}
Denote $\hat{\pi} = \frac{1}{T}\sum_{t = 1}^{T}\pi_t$ as the policy returned by \Cref{alg:nmd}.
    Given data size $N$ and a sufficiently small value $\epsilon >0$, we set $\beta = 2$ and $T = \lceil (2\log N + 1/2)/\epsilon \rceil$.
    Then for any $\pi' \in \Delta(\mathcal{Y}_N)$, we have 
    \begin{equation*}
        \widehat{\mathbb{P}}(\pi' \succ \hat\pi)  \leq \frac{1}{2} + \epsilon.
    \end{equation*}
\end{lemma}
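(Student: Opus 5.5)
The plan is to run the regret bound of \Cref{lem:omd_technical} on the empirical game $\widehat{\mathbb{P}}$, with $\pi'$ as the comparator, and turn it into a duality-gap statement using linearity and skew-symmetry. Fix $\pi' \in \Delta(\mathcal{Y}_N)$.

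\textbf{Step 1 (rewrite both sides of the regret sum).} Since $\hat r_t(y) = \widehat{\mathbb{P}}(y \succ \pi_t)$, we have $\langle \pi', \hat r_t\rangle = \widehat{\mathbb{P}}(\pi' \succ \pi_t)$. Averaging over $t$ and using bilinearity in the second argument gives
\[
\tfrac{1}{T}\sum_{t=1}^T \langle \pi', \hat r_t\rangle = \widehat{\mathbb{P}}(\pi' \succ \hat\pi).
\]
For the algorithm's own term, $\langle \pi_t, \hat r_t\rangle = \widehat{\mathbb{P}}(\pi_t \succ \pi_t)$. Skew-symmetry of $\widehat{\mathbb{P}}$ (with $\widehat{P}_{ii} = 1/2$) makes this equal to $1/2$, because the off-diagonal pairs $(i,j)$ and $(j,i)$ contribute $\pi_t(y_i)\pi_t(y_j)\bigl(\widehat P_{ij}+\widehat P_{ji}\bigr)$ with $\widehat P_{ij}+\widehat P_{ji}=1$. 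Hence the left-hand side of \Cref{lem:omd_technical} equals $T\bigl(\widehat{\mathbb{P}}(\pi' \succ \hat\pi) - \tfrac12\bigr)$.

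\textbf{Step 2 (bound the right-hand side).} The initialization $\pi'_1$ is uniform on $\mathcal{Y}_N$, so $\mathrm{KL}(\pi' \| \pi'_1) = \log N - H(\pi') \le \log N$. For the variation terms, for $t \ge 2$ and every $y$,
\[
|\hat r_t(y) - \hat r_{t-1}(y)| = \Bigl|\sum_{y'} \bigl(\pi_t(y') - \pi_{t-1}(y')\bigr)\widehat{\mathbb{P}}(y \succ y')\Bigr| \le \|\pi_t - \pi_{t-1}\|_1 .
\]
I expect this step to need care. The crude estimate uses only $\widehat{\mathbb{P}} \in [0,1]$. One can sharpen it by subtracting $1/2$ from every entry, which is harmless since $\sum_{y'}(\pi_t - \pi_{t-1})(y') = 0$. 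Either way $\|\hat r_t - \hat r_{t-1}\|_\infty^2 \le \|\pi_t - \pi_{t-1}\|_1^2$.

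With $\beta = 2$, the coefficients $1/\beta = 1/2$ and $\beta/4 = 1/2$ coincide. The positive variation terms for $t = 2,\dots,T$ are therefore cancelled by the negative stability terms. Only the $t = 1$ term survives, $\tfrac{1}{2}\|\hat r_1 - \hat r_0\|_\infty^2$. Under the convention of \citet{rakhlin2013optimization} the prediction $\hat r_0$ has entries in $[0,1]$ (e.g.\ $\hat r_0 = 0$), so this term is at most $1/2$.

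\textbf{Step 3 (conclude).} Combining the two steps gives
\[
T\bigl(\widehat{\mathbb{P}}(\pi' \succ \hat\pi) - \tfrac12\bigr) \le 2\log N + \tfrac12 .
\]
Dividing by $T \ge (2\log N + 1/2)/\epsilon$ yields $\widehat{\mathbb{P}}(\pi' \succ \hat\pi) \le \tfrac12 + \epsilon$.

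The main obstacle is bookkeeping rather than the inequality chain. The index convention of \Cref{lem:omd_technical} (the sum running to $T$, and the initial prediction $\hat r_0$) has to be matched to \Cref{alg:nmd}, whose loop stops at $T-1$ while the average uses $\pi_1,\dots,\pi_T$. I would first check that $\hat r_T$ can be defined from $\pi_T$ without affecting the iterates. I would then check that the $t=1$ boundary term is the only uncancelled one and that it is at most $1/2$, which is where the additive $1/2$ in $T$ comes from.
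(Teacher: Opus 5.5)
Your proposal is correct and follows the paper's proof essentially step for step. You rewrite $\widehat{\mathbb{P}}(\pi'\succ\hat\pi)-\tfrac12$ as the averaged regret $\tfrac1T\sum_t\langle\pi'-\pi_t,\hat r_t\rangle$ using skew-symmetry, then apply \Cref{lem:omd_technical} with $\mathrm{KL}(\pi'\|\pi'_1)\le\log N$, $\|\hat r_t-\hat r_{t-1}\|_\infty\le\|\pi_t-\pi_{t-1}\|_1$, exact cancellation at $\beta=2$, and the surviving $t=1$ term $\tfrac12\|\hat r_1\|_\infty^2\le\tfrac12$ with $\hat r_0=0$; your extra remarks on index bookkeeping and the sharper $\tfrac12\|\pi_t-\pi_{t-1}\|_1$ variation bound are valid but not needed.
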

\begin{proof}
Define $\hat{r}_0 := 0$.
Recall that $\hat\pi = \frac{1}{T}\sum_{t = 1}^T\pi_t$.
For any policy $\pi' \in \Delta(\mathcal{Y}_N)$, we have that
\begin{align*}
    \widehat{\mathbb{P}}(\pi' \succ \hat{\pi}) 
    &= \frac{1}{T}\sum_{t = 1}^{T}\widehat{\mathbb{P}}(\pi' \succ \pi_t) \\
    &\leq \frac{1}{2} + \frac{1}{T}\sum_{t = 1}^{T}[\widehat{\mathbb{P}}(\pi' \succ \pi_t) - \widehat{\mathbb{P}}(\pi_t \succ \pi_t)]   \\
    &\leq \frac{1}{2} + \frac{1}{T}\sum_{t=1}^T \langle \pi' - \pi_t, \hat{r}_t \rangle.
\end{align*}

Note that for any $\pi' \in \Delta(\mathcal{Y}_N)$ and $\pi'_1$ is a uniform distribution on $\mathcal{Y}_N$, we have 
\begin{equation*}
    \text{KL}(\pi' \| \pi'_1) \leq \log(N).
\end{equation*}
For $t \ge 2$, we have $|\hat{r}_t(y) - \hat{r}_{t-1}(y)| = |\sum_{y'} \hat{P}(y \succ y')(\pi_t(y') - \pi_{t-1}(y'))| \le \Vert\pi_t - \pi_{t-1}\Vert_1$. Substituting into \Cref{lem:omd_technical}, for any $\beta \geq 2$:

$$\sum_{t=1}^{T} \langle \pi' - \pi_t, \hat{r}_t \rangle \le \beta \log N + \tfrac{1}{\beta}\Vert\hat{r}_1\Vert_{\infty}^2 + \Big(\tfrac{1}{\beta} - \tfrac{\beta}{4}\Big) \sum_{t=2}^{T} \Vert\pi_t - \pi_{t-1}\Vert_1^2 \le \beta \log N + \tfrac{1}{\beta}$$

where the last inequality holds because $\Vert\hat{r}_1\Vert_{\infty} \le 1$ and, for any $\beta \ge 2$, $\tfrac{1}{\beta} - \tfrac{\beta}{4} \le 0$.
Setting $\beta = 2$ and $T = \lceil (2\log N + 1/2)/\epsilon \rceil$ yields

$$\hat{P}(\pi' \succ \hat{\pi}) \le \tfrac{1}{2} + \tfrac{1}{T}\sum_{t=1}^{T} \langle \pi' - \pi_t, \hat{r}_t \rangle \le \tfrac{1}{2} + \epsilon$$

for all $\pi' \in \Delta(\mathcal{Y}_N)$.

\end{proof}

With the above technical lemma, we can finally show the duality gap bound for NMD.

\thmnmd*

\begin{proof}
First fix data size $N$.
For any $\pi \in \Delta(\mathcal{Y}_N)$, we have that
\begin{align*}
    \mathbb{P}^*(\pi \succ \hat{\pi}) 
    &\leq  \widehat{\mathbb{P}}(\pi \succ \hat{\pi}) + |\widehat{\mathbb{P}}(\pi \succ \hat{\pi}) - \mathbb{P}^*(\pi \succ \hat{\pi})|  \\   
    &\leq \widehat{\mathbb{P}}(\pi \succ \hat{\pi}) + \mathcal{C}_{\text{uni}}(x)\varepsilon(x) \\
    &\leq \frac{1}{2} + 2\mathcal{C}_{\text{uni}}(x)\varepsilon(x).
\end{align*}
Here the third line follows from \Cref{lem:omd_hat_oracle}, and the second line is the same computation as in \Cref{lem:bon_pi_R_bound} so we omit the details.

Let $\tilde{\pi} = \delta_{y^{*}}$ be a pure best response to $\hat{\pi}$ and
$\pi_R$ the coupled rejection-sampling law, both as defined at the head of
Appendix~B.
Conditioned on $\mathcal{Y}_N$,
\begin{align*}
  \mathbb{P}^*(\tilde{\pi} \succ \hat{\pi} | \mathcal{Y}_N)
&\le
\mathbb{P}^*(\pi_R \succ \hat{\pi} | \mathcal{Y}_N) + |\mathbb{P}^*(\pi_R \succ \hat{\pi} | \mathcal{Y}_N) - \mathbb{P}^*(\tilde{\pi} \succ \hat{\pi} | \mathcal{Y}_N)|\\
&\leq \mathbb{P}^*(\pi_R \succ \hat{\pi} \mid \mathcal{Y}_N)
+ 2\,D_{\text{TV}}(\tilde{\pi}, \pi_R | \mathcal{Y}_N) \\
&\le \frac{1}{2} + 2\mathcal{C}_{\text{uni}}(x)\varepsilon(x)
+ 2\,D_{\text{TV}}(\tilde{\pi}, \pi_R | \mathcal{Y}_N),  
\end{align*}
where the last inequality applies the first display of this proof to the
policy $\pi_R \mid \mathcal{Y}_N \in \Delta(\mathcal{Y}_N)$. Taking expectation over $\mathcal{Y}_N$ and
applying \Cref{lem:total_variance} with $M = \frac{N-1}{\log(4/\varepsilon^2(x))}$,
\[
\mathbb{P}^*(\tilde{\pi} \succ \hat{\pi} \mid \mathcal{Y}_N )
\;\le\; \tfrac{1}{2} + 2\mathcal{C}_{\text{uni}}(x)\varepsilon(x)
+ \tfrac{1}{2}\mathcal{C}_{\text{uni}}(x)\varepsilon^2(x) 
\;\leq\; \tfrac{1}{2} + \tfrac{5}{2}\mathcal{C}_{\text{uni}}(x)\varepsilon(x),
\]
and hence
$\DualGap(\hat{\pi})
= 2\,\mathbb{E}[\mathbb{P}^*(\tilde{\pi} \succ \hat{\pi} \mid \mathcal{Y}_N)] - 1
\le  5\varepsilon(x)\mathcal{C}_{\text{uni}}(x)$. 
\end{proof}

\section{Omitted Proofs from \Cref{sec:lower}}
        
    

        
    

\label{appendix:lower}
\thmlower*

\begin{proof}
     Set $\delta_0 := \varepsilon_0 K / (2\sqrt{2}) \leq 1/6$ by the assumption on $\varepsilon_0$. 
     Define the imperfect oracle:
\begin{equation*}
\widehat{ \mathbb{P}}(y_1 \succ y_j) = \tfrac{1}{2} + \delta_0 \quad \text{for all } K \geq j \geq 2, \qquad \widehat {\mathbb{P}}(y_j \succ y_k) = \tfrac{1}{2} \quad \text{for all } K \geq j,k \geq 2.    
\end{equation*} 
Response $y_1$ is the unique dominant strategy by $\widehat{\mathbb{P}}$. 
Now write
 \begin{equation*}
     \hat\pi(y_1) = p, \hat\pi(y_2) = q,
 \end{equation*}
for some $p,q \in [0,1]$.

In the following, we construct two true preference oracles in two worlds $A$ and $B$.
In the world $A$, $\mathbb{P}^*_{A}$ agrees with $\widehat{\mathbb{P}}$ except 
\begin{equation*}
    \mathbb{P}^*_A(y_1 \succ y_2) = \tfrac{1}{2} + 3\delta_0, \qquad \mathbb{P}^*_A(y_2 \succ y_1) = \tfrac{1}{2} - 3\delta_0.
\end{equation*}
In the world $B$, $\mathbb{P}^*_{B}$ agrees with $\widehat{\mathbb{P}}$ except 
\begin{equation*}
    \mathbb{P}^*_B(y_1 \succ y_2) = \tfrac{1}{2} - \delta_0, \qquad \mathbb{P}^*_B(y_2 \succ y_1) = \tfrac{1}{2} + \delta_0.
\end{equation*}
Since $\delta_0 \leq 1/6$, both $P^*_A$ and $P^*_B$ take values in $[0,1]$.

Now we compute the oracle quality $\varepsilon_A^2(x)$ and $\varepsilon_B^2(x)$:
\begin{equation*}
    \varepsilon^2_A(x) = \varepsilon^2_B(x) = \frac{8\delta_0^2}{K^2} = \varepsilon_0^2. 
\end{equation*}
In World A, $y_1$ remains strictly dominant, so we compute:
\begin{equation*}
    \mathbb{P}^*_A(y_1 \succ \hat\pi)  = \frac{p}{2} + q(\frac{1}{2} + 3 \delta_0) + (1-p-q)(\frac{1}{2} + \delta_0) \geq \frac{1}{2} + (1-p)\delta_0.
\end{equation*}
Therefore, we lower bound the duality gap in world A: 
\begin{equation*}
    \DualGap_A(\hat\pi) 
     \geq 2\mathbb{P}^*_A(y_1 \succ \hat{\pi}) - 1 \geq 2(1-p)\delta_0.
\end{equation*}
  In World B, $y_2$ beats $y_1$ by margin $\delta_0$, while $y_2$ ties with $y_j$ for $j \geq 3$, thus $y_2$ is dominant.
  So we compute
  \begin{equation*}
      \mathbb{P}^*_B(y_2 \succ \hat\pi) = p(\frac{1}{2} + \delta_0) + \frac{q}{2} + (1-p-q)\cdot \frac{1}{2} = \frac{1}{2} + p\delta_0.
  \end{equation*}
  Therefore, we lower bound the duality gap in world B:
  \begin{equation*}
 \DualGap_B(\hat\pi) \geq 2\mathbb{P}^*_B(y_2 \succ \hat{\pi}) - 1 \geq  2p\delta_0.
  \end{equation*}
  The adversary selects the world that is worse for the algorithm:
  \begin{equation*}
      \DualGap(\hat\pi) \geq \max\bigl(2(1-p)\delta_0,\; 2p\delta_0\bigr).
  \end{equation*}
  The algorithm minimizes this by setting $p = 1/2$, yielding:
  \begin{equation*}
      \DualGap(\hat\pi) \geq \delta_0 = \frac{\varepsilon_0 K}{2\sqrt{2}} = \frac{\varepsilon_0\, \mathcal{C}_{\uni}}{2\sqrt{2}}
  \end{equation*}
  since $\mathcal{C}_{\uni}(x) = K$.
 \end{proof}

 \section{Additional Empirical Results}
\label{sec:add_exp}

\subsection{Further Experimental Details}\label{app:details}

All win/draw/lose judgments are produced by DeepSeek-V4-Flash (temperature = 0, at most $4$ output tokens). 
The verbatim prompt given to the judge:
\begin{quote}
\small\ttfamily
\textbf{System:} You are an impartial expert judge evaluating the quality of
two AI assistant responses to the same user prompt. Judge which response better
follows the user's instructions and is more helpful, correct, coherent and
appropriately detailed for the request. Do not let the length of a response,
the order in which the responses are presented, or stylistic flourishes bias
your decision. Output exactly one character: 'A' if Response A is better, or
'B' if Response B is better. Do not output anything else.

\medskip
\textbf{User:}\\
{[}User Prompt{]}\\
\{prompt\}\\[4pt]
{[}Response A{]}\\
\{a\}\\[4pt]
{[}Response B{]}\\
\{b\}\\[4pt]
Which response is better? Answer with a single letter: A or B.
\end{quote}

\subsection{Additional Experiments}
In this section, We compare our methods against three baselines: Borda Best-of-$N$, standard reward-based Best-of-$N$, and a fine-tuned Nash-MD-PG model~\citep{munos2024nash}.

\begin{algorithm}[H]
    \caption{Borda Best-of-N}
    \label{alg:borda_bon}
    \begin{algorithmic}[1]
        \State \textbf{Input}: Prompt $x$, reference policy $\pi_{\reftext}$, preference oracle $\widehat{\mathbb{P}}$, sample size $N$.
        \State Draw $\widehat{\mathcal{Y}}_N = (y_1, \dots, y_N) \sim \pi_{\reftext}(\cdot|x)$ i.i.d.
        \State Query $\widehat{P}_{ij} \gets \widehat{\mathbb{P}}(y_i \succ y_j \mid x)$ for all $1 \leq i < j \leq N$, and set $\widehat{P}_{ji} \gets 1 - \widehat{P}_{ij}$, $\widehat{P}_{ii} \gets \tfrac{1}{2}$.
        \State Compute the Borda score for each response:
        \begin{equation*}
            \hat{r}(y_i) \gets \frac{1}{N-1} \sum_{j \neq i} \widehat{P}_{ij}, \quad \forall\, i \in [N].
        \end{equation*}
        \State \textbf{Return} $\widehat{y} \gets \arg\max_{y \in \widehat{\mathcal{Y}}_N} \hat{r}(y)$.
    \end{algorithmic}
\end{algorithm}

\paragraph{A New Borda Best-of-$N$ Baseline.}
To control the oracle strength, we propose a Borda Best-of-$N$ method under the same preference oracle, shown in \Cref{alg:borda_bon}.
We report the result in \Cref{tab:bon-borda}.
We find that Borda Best-of-N attains win-rates close to Best-of-Nash and NMD across all three datasets.
To understand this, we ran a diagnostic on the dataset: in $73.8\%$ of $N=64$ sub-samples, the Borda winner is a \emph{Condorcet winner} of the empirical preference matrix---it beats every other candidate pairwise---in which case the Nash equilibrium of the sub-game is exactly the pure strategy on that response, so all three methods return the same response. 
The agreement is thus a structural property of the data rather than evidence that the equilibrium computation is redundant. 
On the remaining non-Condorcet sub-samples, argmax-style rules carry no guarantee.

\begin{table}[t]
\centering
\small
\setlength{\tabcolsep}{6pt}
\renewcommand{\arraystretch}{1.1}
\begin{tabular}{lcccc}
\toprule
Dataset & Base SFT  & Best-of-Nash  & Borda Best-of-N \\
\midrule
TLDR      & 62.9\%  & 73.5\%  & 72.5\% \\
HelpSteer2  & 44.1\%  & 68.8\%  & 67.9\%  \\
UltraFeedback & 23.9\%  & 48.8\%  & 45.7 \% \\
\bottomrule
\end{tabular}
\caption{Comparison of expected win-rate (EWR) across three datasets.
Best-of-Nash and Borda Best-of-N sample $N = 64$ responses from LLaMA3-SFT and we use LLaMA3-PM as preference model.}
\label{tab:bon-borda}
\end{table}

\paragraph{Standard Best-of-N.}
We implement the standard Best-of-N method under a fine-tuned
Bradley--Terry reward model. 
We train the reward model and a pairwise preference model under an identical protocol,
differing \emph{only} in architecture: both fine-tune the same backbone
(\href{https://huggingface.co/Qwen/Qwen3-4B-Instruct-2507}{Qwen3-4B-Instruct-2507}~\citep{qwen3technicalreport}) with a six-criterion linear head on the same
HelpSteer2 training pairs.
we draw $N = 128$ candidates per prompt from the
base policy (LLaMA3-SFT, temperature $1.0$) and compare three selectors: base SFT, Best-of-N under the Bradley-Terry reward, and Best-of-Nash under the pairwise
preference matrix.
We report the expected win rate (EWR) against
the prompt's human-preferred response, following the protocol
of \Cref{sec:exp}.
\Cref{tab:bon-comparison} reports the results: both methods improve over the base policy by $17$--$21$ percent,
confirming that a well-trained oracle of either form provides a strong
selection signal.
Best-of-Nash attains a $4.0\%$ higher win rate than
reward-based Best-of-N.

\begin{table}[h]
\centering
\begin{tabular}{lc}
\toprule
Selector & Expected win rate \\
\midrule
Base policy (random candidate) & 47.0\% \\
Best-of-N (Bradley--Terry reward) & 64.0\% \\
Best-of-Nash (pairwise preference) & \textbf{68.0\%} \\
\bottomrule
\end{tabular}
\caption{Comparison of Best-of-N and Best-of-Nash on held-out HelpSteer 2 prompts.
}
\label{tab:bon-comparison}
\end{table}

\paragraph{A Fine-tuned Nash-MD Baseline.}
Finally, we compare our methods against fine-tuning with general preferences: we fine-tune the base policy with Nash-MD-PG method~\citep{munos2024nash} on HelpSteer2 and evaluate it against Best-of-Nash and NMD applied to the same base policy at inference time.
We use Qwen3-0.6B as the base policy and LLaMA3-PM as the preference model.
For the Nash-MD-PG method, we train the total of $1$ epoch with learning rate of 2e-6, KL regularization coefficient of 0.01, a mixture coefficient of $0.5$, and temperature $0.7$.
We report the result in \Cref{tab:nash-md}.
Fine-tuned Nash-MD-PG improves over the base
policy ($40.5\%$ vs.\ $34.2\%$ EWR), while Best-of-Nash and NMD---using the same
preference model and no parameter updates---improve substantially
further ($50.5\%$ and $52.0\%$).
Notably, Nash-MD's gain comes largely
from converting losses into draws (a $59.0\%$ draw rate vs.\ $42.3\%$
for the base) while its outright win rate does not increase, whereas Best-of-Nash
and NMD raise the win rate itself. 
At this model scale and training
budget, inference-time equilibrium computation thus extracts more from
the same preference model than fine-tuning on it.
Thus, we read
this as evidence that our methods are a strong training-free
alternative.

\begin{table}[t]
\centering
\small
\setlength{\tabcolsep}{6pt}
\renewcommand{\arraystretch}{1.1}
\begin{tabular}{lccccc}
\toprule
Method & Win  & Draw & Lose  & EWR \\
\midrule
Base policy      & 13.0\%  & 42.3\%  & 44.7\% & 34.2\% \\
Nash-MD (fine-tuned) & 14.0\%  & 55.0\%  & 31.0\% & 41.5\%  \\ 
Best-of-Nash ($N = 64$)  & 29.0\%  & 43.0\% & 28.0\% & 50.5\%  \\
NMD ($N=64, \beta = 1$) & 33.0\%  & 38.0\%  & 29.0\% & 52.0\%  \\
\bottomrule
\end{tabular}
\caption{Comparison of Best-of-Nash, NMD, and a fine-tuned Nash-MD on HelpSteer2.
Best-of-Nash and NMD sample $N = 64$ responses from Qwen3-0.6B and we use LLaMA3-PM as preference model.}
\label{tab:nash-md}
\end{table}

\newpage

\end{document}